\documentclass{article}

\usepackage{float}
\usepackage{wrapfig}
\usepackage{microtype}
\usepackage{graphicx}
\usepackage{subcaption}
\usepackage{booktabs} 

\usepackage{hyperref}

\usepackage{enumitem}

\usepackage{algorithm}
\usepackage{algorithmic} 
\usepackage{multirow}
\usepackage{makecell}
\usepackage[dvipsnames,table]{xcolor}
\definecolor{bluecite}{HTML}{0071BC}

\definecolor{mygray}{gray}{.92}
\definecolor{mygreen1}{RGB}{253, 244, 244}
\definecolor{mygreen2}{RGB}{238, 243, 243}
\definecolor{ForestGreen}{RGB}{34,139,34}
\newcommand{\fg}[1]{\mathbf{\mathcolor{ForestGreen}{#1}}}
\definecolor{Forestred}{RGB}{220,50,50}

\usepackage[accepted]{icml2026}

\usepackage{amsmath}
\usepackage{amssymb}
\usepackage{mathtools}
\usepackage{amsthm}

\usepackage[capitalize,noabbrev]{cleveref}

\theoremstyle{plain}
\newtheorem{theorem}{Theorem}[section]
\newtheorem{proposition}[theorem]{Proposition}
\newtheorem{lemma}[theorem]{Lemma}

\theoremstyle{definition}

\theoremstyle{remark}

\usepackage[textsize=tiny]{todonotes}

\icmltitlerunning{Spectral Heat Flow for Conservative Token Condensation in Vision-Language Models}

\begin{document}

\twocolumn[
  \icmltitle{Spectral Heat Flow for Conservative Token Condensation in Vision-Language Models}



  \icmlsetsymbol{equal}{*}

  \begin{icmlauthorlist}
    \icmlauthor{Zhaoyang Li}{yyy,equal}
    \icmlauthor{Yanjun Li}{yyy,equal}
    \icmlauthor{Wangkai Li}{yyy}
    \icmlauthor{Yujia Chen}{yyy}
    \icmlauthor{Tianzhu Zhang}{yyy}
  \end{icmlauthorlist}

  \icmlaffiliation{yyy}{School of Information Science and Technology / National Key Laboratory of Deep Space Exploration, University of Science and Technology of China}

  \icmlcorrespondingauthor{Tianzhu Zhang}{tzzhang@ustc.edu.cn}

  \icmlkeywords{Machine Learning, ICML}

  \vskip 0.3in
]



\printAffiliationsAndNotice{\icmlEqualContribution}

\begin{abstract}
 Vision-Language Models (VLMs) are costly at inference time because they must process long sequences of visual tokens. Existing token pruning methods often degrade under high compression by blindly discarding information, breaking spatial structure or collapsing diversity. We propose SpecFlow, a training-free framework that shifts the paradigm from destructive pruning to conservative condensation, strictly enforcing spatial coverage and statistical conservation to ensure stability. Treating visual tokens as nodes in a $k$NN graph, SpecFlow (i) computes a stable importance field via spectral heat flow to preserve structural coherence, (ii) allocates budgets via adaptive spatial partitioning to guarantee coverage, and (iii) aggregates discarded information into coreset sinks to maintain statistical conservation. The method is plug-and-play, requires no fine-tuning, and is compatible with FlashAttention. Experiments confirm that our SpecFlow outperforms SOTA methods across tasks, VLM architectures, and pruning ratios. Notably, LLaVA-1.5 with SpecFlow retains 95.6\% of original performance despite pruning 88.9\% of visual tokens, offering an exceptional efficiency-accuracy balance.  Code is available
at \url{https://github.com/Lzy-dot/SpecFlow}.
\end{abstract}

\section{Introduction}
\label{sec:intro}

\begin{figure}[t]
	\includegraphics[width=\linewidth]{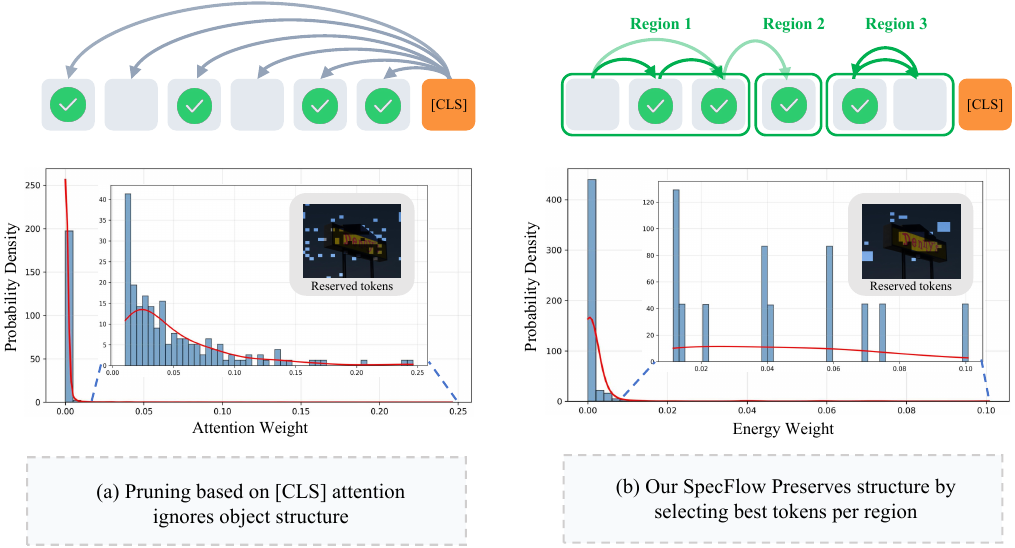}
\caption{\textbf{Global Top-$K$ pruning vs. our proposed SpecFlow.} (a) Global Top-$K$ pruning based on \texttt{[CLS]} attention can yield fragmented selections due to spiky attention distributions, often creating spatial holes within objects. (b) SpecFlow diffuses attention-derived energy on a $k$NN token graph and applies coverage-aware regional budgeting, yielding region-coherent retained tokens under high compression.}
    \label{fig:fig1}
	\vspace{-5mm}
\end{figure}

Recent Vision-language models (VLMs)~\citep{li2025mini,team2023gemini,liu2024llava,chen2024internvl} have shown remarkable progress in multimodal understanding, delivering strong results on visual question answering~\citep{guo2023images,zhao2024lova3,huynh2025visual,kuang2025natural}, image captioning~\citep{ghandi2023deep,chen2024sharegpt4v}, and video understanding~\citep{lin2024video,maaz2024video,wang2024internvideo2}.Despite this progress, efficient deployment remains challenging due to the large number of visual tokens produced by modern vision encoders. For example, LLaVA-1.5~\citep{liu2024llava1.5} typically encodes a $336 \times 336$ image into 576 patch tokens, which are then processed by the language model during the prefill stage, often resulting in substantially higher latency than text-only inference.

A natural approach to accelerate inference is to reduce the number of visual tokens. Existing token pruning methods~\citep{chen2024image,yang2025visionzip,zhang2024sparsevlm,zou2025don} commonly perform token pruning via Top-$K$ selection using attention-derived scores. While effective in reducing computation, such point-wise ranking can be brittle under high compression because it disregards two properties of visual token sets: spatial coherence (tokens corresponding to a region tend to form contiguous structures) and contextual dependency (background and surrounding regions can be essential for reasoning). Empirically, these limitations manifest as two recurring failure modes. First, attention scores can be highly concentrated, so Top-$K$ selection often yields spatially fragmented token subsets, creating “holes” within objects and disrupting coherent visual evidence (\cref{fig:fig1} (a)). Second, saliency-driven pruning tends to over-preserve foreground regions while aggressively removing contextual background, which can impair tasks that rely on spatial relations and scene-level semantics.

These observations suggest that token pruning should not be treated solely as independent, discrete selection. Instead, an effective pruning rule should respect the collective structure of visual tokens: importance should vary smoothly across coherent regions, while allocation should maintain coverage to avoid systematic blind spots. Motivated by this, we reformulate visual token pruning as an importance propagation problem on a token graph (\cref{fig:fig1} (b)).

We propose SpecFlow, a training-free framework for efficient VLM inference that performs structure-preserving token pruning via spectral heat diffusion~\citep{coifman2006diffusion}. SpecFlow constructs a graph over visual tokens and computes a stable importance field by propagating initial saliency through spectral heat flow, which naturally smooths spiky attention into region-consistent importance. To prevent spatial collapse under high compression, SpecFlow further allocates per-region budgets using adaptive quadtree partitioning, enforcing explicit spatial coverage. Finally, rather than discarding information, SpecFlow conservatively aggregates pruned tokens into coreset “sinks”, preserving summary statistics and diversity among the retained representations. SpecFlow is plug-and-play, requires no fine-tuning, and is compatible with efficient attention implementations such as FlashAttention~\cite{dao2022flashattention,dao2023flashattention}.

In summary, our contributions to the community include:
\begin{enumerate}
  \item We characterize two common failure modes of attention-score-based visual token pruning for VLMs: spatial fragmentation and loss of contextual evidence, especially under high compression.
  \item We introduce spectral heat flow on a $k$NN token graph as a training-free importance propagation mechanism, producing a stable and region-coherent importance field for pruning.
  \item We propose coverage-aware pruning via adaptive quadtree partitioning with energy-proportional budget allocation, and a conservative condensation step that aggregates removed tokens into a single coreset sink (with an additional diversity-preserving anchor token).
 
\end{enumerate}

\section{Related Work}
\label{sec:rw}
\subsection{Vision-Language Models}
Recent progress in large language models has spurred vision-language models (VLMs) that transfer strong language capabilities to multimodal understanding and generation. Many recent VLMs~\citep{liu2024llava1.5,bai2023qwenvl,li2025bindweave,li2025generalized,liu2024llava,lin2024video,liu2024llavanext} improve accuracy by increasing the visual token budget through higher image resolution or multiple frames. However, the resulting visual sequences grow rapidly in length and introduce substantial computational overhead. For example, LLaVA-1.5~\citep{liu2024llava1.5} encodes a $336\times336$ image into 576 tokens, and LLaVA-NeXT~\citep{liu2024llavanext} scales to 2{,}880 tokens at $672\times672$. Video models such as VideoLLaVA~\citep{lin2024video} and VideoPoet~\citep{kondratyuk2023videopoet} further require thousands of tokens to represent multiple frames. Moreover, simply increasing the token budget does not fully eliminate visual deficiencies and hallucinations, while further amplifying inference cost. These challenges motivate token-efficient visual representations and sparsification methods that substantially reduce computation while maintaining comparable performance.

\subsection{Visual Token Compression for VLMs}
Visual token compression is crucial for scaling VLMs to high-resolution images and long videos under limited compute and context. Prior work mainly follows two routes. The first is pre-LLM compression, which reduces visual tokens before they enter the decoder. For example, LLaMA-VID~\citep{li2024llama} adopts a compact frame representation, and VisionZip~\citep{yang2025visionzip} selects dominant tokens using vision-encoder attention and can merge the remaining tokens. The second is decoder-stage sparsification, which prunes visual tokens during LLM inference based on their diminishing utility in deeper layers. In this category, FastV~\citep{chen2024image} prunes low-importance visual tokens after an early layer using attention-based ranking in a plug-and-play manner, while SparseVLM~\citep{zhang2024sparsevlm} further introduces training-free, text-guided pruning with adaptive per-layer sparsity. Despite their effectiveness, training-free pruning often relies on spiky attention scores and some methods are not compatible with FlashAttention~\citep{dao2022flashattention,dao2023flashattention}. Our SpecFlow instead diffuses saliency to produce region-consistent importance and adopts a FlashAttention-compatible pruning design.

\section{Preliminary and Motivation}
\label{sec:method}
\subsection{Preliminary}

\paragraph{VLM inference and computation complexity.}
We consider a VLM $\mathcal{M}_\theta$ composed of a vision encoder $f_{\mathrm{vis}}$, a vision--language projection module $g$, and an autoregressive decoder $f_{\mathrm{lm}}$ with $L$ Transformer layers.
Given an image $I$ and a text prompt $X=(x_1,\ldots,x_T)$, the vision encoder produces patch features $V = f_{\mathrm{vis}}(I) \in \mathbb{R}^{N \times d_v}$, which are mapped to language-aligned visual tokens $Z = g(V) \in \mathbb{R}^{N \times d}$.
The decoder processes the concatenated sequence $[Z;E(X)]$ during prefill and then generates outputs autoregressively.
The total decoder input length is $n = N + T$, where $T = n_{\mathrm{sys}} + n_{\mathrm{q}}$ denotes the number of text tokens including the system prompt and the user query.
Since Transformer self-attention in prefill scales quadratically~\citep{vaswani2017attention} with the sequence length, the dominant cost is
\begin{equation}
\mathcal{C}_{\mathrm{prefill}} = \mathcal{O}\!\left(L \cdot n^2 \cdot d\right),
\end{equation}
where $d$ is the hidden dimension.
In typical VLM settings, $N \gg T$, so computation is largely dominated by visual tokens, making the reduction of $N$ essential for improving VLM inference efficiency.

\paragraph{Graph diffusion for token importance.}
Let $G=(\mathcal{V},\mathcal{E})$ be a graph over $N$ tokens and let
$W\in\mathbb{R}^{N\times N}$ be a nonnegative row-stochastic matrix.
Given an initial importance signal $e^{(0)}\in\mathbb{R}^N$, a standard way to
incorporate neighborhood consistency is the following diffusion update:
\begin{equation}
e^{(t+1)} = (1-\alpha)e^{(0)} + \alpha W^\top e^{(t)},
\label{eq:diffusion}
\end{equation}
where $\alpha\in(0,1)$ controls the propagation strength. This process is closely
related to personalized PageRank and yields a smoothed importance estimate~\citep{page1999pagerank,jeh2003scaling}. 
\begin{proposition}[Restart diffusion as Dirichlet-regularized smoothing]
\label{prop:dirichlet-min}
Assume the transition matrix $W$ is obtained by row-normalizing a symmetric affinity matrix $\mathbf{S}\in\mathbb{R}^{N\times N}$,
i.e., $W=D^{-1}\mathbf{S}$ where $\mathbf{S}=\mathbf{S}^\top\ge 0$ and $D=\mathrm{diag}(\mathbf{S}\mathbf{1})$.
Then the diffusion in Eq.~\eqref{eq:diffusion} converges to a unique fixed point $e^\star$ satisfying
\begin{equation}
(\mathbf{I}-\alpha W^\top)e^\star=(1-\alpha)e^{(0)}.
\label{eq:diff_fixed_point}
\end{equation}
Moreover, $e^\star$ is equivalently characterized as the unique minimizer of a convex objective that balances
fidelity to the seed and graph smoothness:
\begin{equation}
\begin{aligned}
e^\star=\arg\min_{e\in\mathbb{R}^N}\ 
&\frac{1}{2}\bigl\|D^{-1}e-D^{-1}e^{(0)}\bigr\|_{D}^2 \\
&+\frac{\alpha}{2(1-\alpha)}\,(D^{-1}e)^\top(D-\mathbf{S})(D^{-1}e).
\end{aligned}
\label{eq:dirichlet_obj}
\end{equation}
where $\|u\|_D^2=u^\top Du$ and $(D-\mathbf{S})$ is the (combinatorial) graph Laplacian.
\end{proposition}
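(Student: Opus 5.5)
The plan has three parts: prove convergence of Eq.~\eqref{eq:diffusion} by a contraction argument, identify the fixed point through Eq.~\eqref{eq:diff_fixed_point}, and then show that the first-order optimality condition of Eq.~\eqref{eq:dirichlet_obj} reduces, after the substitution $e=Du$, to that same linear system. Throughout I assume every node has positive degree, so that $D\succ 0$ and $W=D^{-1}\mathbf{S}$ is well defined. This holds for a $k$NN affinity graph; if it is not automatic in general, I would add it to the hypotheses.

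\textbf{Convergence and the fixed point.} Since $W$ is nonnegative and row-stochastic, $W^\top$ is column-stochastic. Hence the induced $\ell_1$ norm is $\|W^\top\|_1=\max_j\sum_i W_{ji}=1$. The map $T(e)=(1-\alpha)e^{(0)}+\alpha W^\top e$ is therefore an $\ell_1$-contraction with modulus $\alpha<1$. By Banach's fixed-point theorem, the iterates converge geometrically to a unique fixed point $e^\star$, and $e^\star=T(e^\star)$ is exactly Eq.~\eqref{eq:diff_fixed_point}. The same bound gives $\rho(\alpha W^\top)\le\alpha<1$, so $\mathbf{I}-\alpha W^\top$ is invertible. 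This yields the closed form
\[
e^\star=(1-\alpha)\sum_{t\ge 0}\alpha^t (W^\top)^t e^{(0)},
\]
and confirms that the fixed point is unique.

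\textbf{Reformulating the objective.} I would change variables to $u=D^{-1}e$, which is a bijection since $D$ is invertible. The objective becomes
\[
F(u)=\tfrac12 (u-u_0)^\top D(u-u_0)+\tfrac{\alpha}{2(1-\alpha)}\,u^\top L u,
\]
with $u_0=D^{-1}e^{(0)}$ and $L=D-\mathbf{S}$. The Laplacian is positive semidefinite because
\[
u^\top L u=\tfrac12\sum_{i,j}\mathbf{S}_{ij}(u_i-u_j)^2\ge 0,
\]
which uses the symmetry of $\mathbf{S}$. The Hessian of $F$ is therefore $D+\tfrac{\alpha}{1-\alpha}L\succ 0$, so $F$ is strictly convex. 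Consequently it has a unique minimizer, characterized by a vanishing gradient. Because $e\mapsto D^{-1}e$ is linear and invertible, the objective in $e$ is also strictly convex, and its minimizer corresponds to that of $F$.

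\textbf{Matching the optimality condition.} Setting $\nabla F(u)=D(u-u_0)+\tfrac{\alpha}{1-\alpha}(D-\mathbf{S})u=0$ and multiplying by $(1-\alpha)$ gives
\[
Du-\alpha\mathbf{S}u=(1-\alpha)Du_0 .
\]
Substituting $u=D^{-1}e$ and $Du_0=e^{(0)}$ turns this into
\[
e-\alpha\,\mathbf{S}D^{-1}e=(1-\alpha)e^{(0)}.
\]
Finally, symmetry of $\mathbf{S}$ gives $\mathbf{S}D^{-1}=(D^{-1}\mathbf{S})^\top=W^\top$, so this is exactly Eq.~\eqref{eq:diff_fixed_point}. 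By uniqueness of its solution, the minimizer equals $e^\star$.

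\textbf{Main obstacle.} The only delicate point is bookkeeping rather than depth. One must check that the $D$-weighted norm and the $D^{-1}$ rescaling combine to produce $W^\top=\mathbf{S}D^{-1}$ and not $W$. This is the single place where the symmetry $\mathbf{S}=\mathbf{S}^\top$ is essential, and I would verify it carefully when computing the gradient.
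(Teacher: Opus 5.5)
Your proposal is correct and follows essentially the same route as the paper: an $\ell_1$-contraction argument with Banach's theorem gives existence, uniqueness and convergence, and then the substitution $u=D^{-1}e$, strict convexity from $D\succ 0$ and $D-\mathbf{S}\succeq 0$, and the stationarity condition $(D-\alpha\mathbf{S})u=(1-\alpha)Du_0$ recover the fixed-point equation via $\mathbf{S}D^{-1}=W^\top$. One small correction: symmetry of $\mathbf{S}$ is not the ``single'' place it matters, since it is also used in the Laplacian quadratic-form identity (as you note yourself) and in writing the gradient of $u^\top L u$ as $2Lu$ rather than $(L+L^\top)u$.
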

Proof and additional discussion are deferred to Appendix~\ref{app:heat_theory}. Eq.~\eqref{eq:dirichlet_obj} shows that diffusion is not a heuristic~\citep{zhu2003harmonic,zhou2003localglobal}: it is the solution of a convex
problem trading off seed fidelity and Dirichlet energy on the token graph.

\paragraph{Token graph and transition matrix.}
Given token embeddings $Z=[z_1,\ldots,z_N]^\top\in\mathbb{R}^{N\times d}$, define a $k$-nearest-neighbor ($k$NN) graph under cosine similarity by assigning
to each node $i$ a neighbor set $\mathcal{N}_k(i)$. A sparse \emph{random-walk
(transition) matrix}~\citep{luxburg2007spectral} $W\in\mathbb{R}^{N\times N}$ is obtained by softmax
normalization over outgoing edges,
\begin{equation}
W_{ij} =
\begin{cases}
\frac{\exp\!\bigl(\tau\,\mathrm{sim}(z_i,z_j)\bigr)}
{\sum_{j'\in\mathcal{N}_k(i)}\exp\!\bigl(\tau\,\mathrm{sim}(z_i,z_{j'})\bigr)}, & j\in\mathcal{N}_k(i),\\
0, & \text{otherwise},
\end{cases}
\label{eq:knn_transition}
\end{equation}
where $\mathrm{sim}(z_i,z_j)=\frac{z_i^\top z_j}{\lVert z_i\rVert_2\,\lVert z_j\rVert_2}$
and $\tau>0$ is a temperature. By construction, $W$ is nonnegative and
row-stochastic ($W\mathbf{1}=\mathbf{1}$), and thus admits the standard
interpretation of one-step transition probabilities on the token graph. This
matrix serves as the propagation operator in Eq.~\eqref{eq:diffusion}. 
\footnote{We use a symmetrized $k$NN edge set
(i.e., include $(i,j)$ if $j\in\mathcal{N}_k(i)$ or $i\in\mathcal{N}_k(j)$), which makes the underlying affinity symmetric
under cosine similarity, while keeping the graph sparse.}

\subsection{Why Diffuse on a Feature-Similarity Graph Rather Than Raw Attention}

\label{sec:motivation}

Although attention maps provide a convenient training-free saliency cue, we do \emph{not} use raw patch$\leftrightarrow$patch self-attention as the structural graph for diffusion. The reason is that in CLIP-style ViTs, raw self-attention is often {not a reliable semantic affinity}: it can connect tokens across {inconsistent regions} and does not preserve region coherence, as discussed in SCLIP~\citep{wang2024sclip}. Moreover, SC-CLIP~\citep{bai2025self} shows that {anomaly tokens} can emerge in deep layers and {absorb attention mass} from normal tokens, weakening spatial awareness and inducing feature homogenization. Therefore, using raw attention as a diffusion operator would propagate these artifacts, causing energy leakage to unrelated regions and reducing region-level discriminability under high compression. Accordingly, instead of diffusing on raw patch-to-patch attention links, we construct a $k$NN graph from token feature similarity and perform diffusion on this similarity-induced graph to preserve region coherence.

\section{Method}
Building on the above analysis, we propose SpecFlow, an efficient visual token pruning framework designed for high-performance vision-language modeling. 
SpecFlow leverages HeatFlow and adaptive quadtree allocation to better preserve the holistic contextual information and structural coherence of images. To avoid information loss, pruned tokens are compressed into compact coreset-style sink tokens to ensure statistical conservation.
\subsection{HeatFlow: CLS-seeded energy diffusion}
\label{sec:heatflow}

We instantiate the diffusion in Eq.~\eqref{eq:diffusion} on the $k$NN graph in
Eq.~\eqref{eq:knn_transition} to obtain a structure-aware token energy used for pruning. ~\cref{alg:heatflow_short} summarizes the overall procedure.
We initialize the token energy from the attention distribution of the visual \texttt{[CLS]} token.
In transformer-based vision encoders, \texttt{[CLS]} serves as a global aggregation token~\citep{dosovitskiy2020image,radford2021learning} whose
representation is optimized to summarize image-level semantics.
Therefore, the outgoing attention weights from \texttt{[CLS]} provide a lightweight, training-free
cue of which visual tokens contribute most to the global representation. Compared to local
heuristics or post-hoc gradients, this signal is readily available
during the forward pass and is naturally aligned with the model's internal routing of information. Formally, let $A\in\mathbb{R}^{(N+1)\times(N+1)}$ be the self-attention matrix at a chosen layer/head,
where index $0$ corresponds to \texttt{[CLS]} and indices $1,\ldots,N$ correspond to visual tokens.
We define the initial energy as
\begin{equation}
e^{(0)}_i = A_{0,i}, \quad i=1,\ldots,N,
\label{eq:cls_seed}
\end{equation}
optionally aggregating multiple heads to reduce head-specific noise.
This initialization highlights globally relevant regions while remaining computationally negligible,
making it a suitable seed for the subsequent graph diffusion in Eq.~\eqref{eq:diffusion}. Starting from $e^{(0)}$, we run a small number of iterations of Eq.~\eqref{eq:diffusion}
to obtain $e^{(T)}$, and use it as the final energy $e^\star$ after normalization.
Since $W$ is $k$-sparse, the update costs $\mathcal{O}(Nk)$ per iteration. We $\ell_1$-normalize $e$ after each iteration only for numerical stability, since $W$ is row-stochastic and $e^{(0)}$ is normalized, the diffusion preserves mass in exact arithmetic (Appendix~\ref{app:heat_theory}).

\begin{algorithm}[t]
\caption{CLS-seeded heat flow on a $k$NN token graph}
\label{alg:heatflow_short}
\begin{algorithmic}[1]
\INPUT self-attention $A$, token embeddings $Z$
\OUTPUT diffused energy $E$

\FOR{$b=1$ \textbf{to} $B$}
  \STATE \textbf{Seed from CLS attention.}
  Select heads $\mathcal{H}$ and set
  $s_i \leftarrow \frac{1}{|\mathcal{H}|}\sum_{h\in\mathcal{H}} A_{b,h,0,i}$ for $i=1,\ldots,N$
  \STATE $s \leftarrow s / (\mathbf{1}^\top s + \varepsilon)$
  \STATE \textbf{Construct token transition matrix.}
  Form the row-stochastic matrix $W$ from $\{z_{b,i}\}_{i=1}^N$ by Eq.~\eqref{eq:knn_transition}
  \STATE $e \leftarrow s$
  \FOR{$t=1$ \textbf{to} $T_{\text{diff}}$}
    \STATE $e \leftarrow (1-\alpha)s + \alpha W^\top e$
    \STATE $e \leftarrow e / (\mathbf{1}^\top e + \varepsilon)$
  \ENDFOR
  \STATE $E_b \leftarrow e$
\ENDFOR
\STATE \textbf{return} $E$
\end{algorithmic}
\end{algorithm}
\vspace{-5mm}

\subsection{Quadtree token pruning with energy allocation}
\label{sec:quadprune}

Given the diffused token energy $e^\star\in\mathbb{R}^{N}$ from
Sec.~\ref{sec:heatflow}, we select a compact subset of visual tokens under a
budget $K$ while preserving spatial coverage. Since tokens are arranged on an
$H\times W$ grid, we first reshape the energy into a 2D map
$E\in\mathbb{R}^{H\times W}$ and then perform adaptive quadtree partitioning~\citep{samet1984quadtree}.
Intuitively, we split regions with high energy variation into finer crops and
keep low-variation regions coarse, enabling multi-scale selection. ~\cref{alg:quadprune_short} summarizes the overall pruning and allocation procedure.

\paragraph{Energy map.}
Let $E=\mathrm{Reshape}(e^\star, H, W)$ denote the energy map aligned with the
token grid. For any rectangular crop
$c=(r_0,r_1,t_0,t_1)$, we use $E[c]$ to denote the set of energies inside $c$ and
$|c|=(r_1-r_0)(t_1-t_0)$ its area.

\paragraph{Quadtree splitting.}
Starting from the full region $c_{\mathrm{root}}=(0,H,0,W)$, we recursively split
a crop into four quadrants if it is (i) sufficiently large and (ii) non-uniform
in energy.
Concretely, we compute the crop mean and standard deviation
\begin{equation}
\begin{aligned}
\mu(c) &= \frac{1}{|c|}\sum_{(r,t)\in c} E_{r,t},\\
\sigma(c) &= \sqrt{\frac{1}{|c|}\sum_{(r,t)\in c}\bigl(E_{r,t}-\mu(c)\bigr)^2}.
\end{aligned}
\end{equation}
and apply the split criterion
\begin{equation}
\mathrm{split}(c)=
\bigl(h(c)\ge 2m\bigr)\ \wedge\ \bigl(w(c)\ge 2m\bigr)\ \wedge\ \bigl(\sigma(c)>\delta\bigr),
\label{eq:quad_split}
\end{equation}
where $h(c)=r_1-r_0$ and $w(c)=t_1-t_0$ are the crop height/width, $m$ is the
minimum crop size, and $\delta$ controls the sensitivity to energy variation.
If $\mathrm{split}(c)$ is true, we partition $c$ at midpoints
$r_m=\lfloor(r_0+r_1)/2\rfloor$ and $t_m=\lfloor(t_0+t_1)/2\rfloor$ to obtain four
children crops; otherwise $c$ becomes a leaf.
We denote the set of all leaf crops by $\mathcal{C}$, which forms a disjoint
cover of the grid.

\begin{algorithm}[t]
\caption{Quadtree token pruning with energy allocation}
\label{alg:quadprune_short}
\begin{algorithmic}[1]
\INPUT grid tokens $Z$ (size $H{\times}W$), diffused energy $e^\star$, budget $K$
\OUTPUT selected tokens $\tilde{Z}$

\FOR{$b=1$ \textbf{to} $B$}
  \STATE $E \leftarrow \mathrm{Reshape}(e^\star_b, H, W)$
  \STATE \textbf{Quadtree splitting.} Starting from $(0,H,0,W)$, recursively split a crop $c$
  into four quadrants whenever $\mathrm{split}(c)$ holds (Eq.~\eqref{eq:quad_split});
  denote the resulting leaf crops by $\mathcal{C}$
  \STATE \textbf{Energy-guided allocation.}
For each leaf crop $c\in\mathcal{C}$, compute its energy mass $S(c)$ from $E$ as in Eq.~\eqref{eq:energy_mass} and initialize an integer token quota $q_c$ by Eq.~\eqref{eq:quota}; adjust $\{q_c\}$ to satisfy $\sum_{c\in\mathcal{C}}q_c=K$
  \STATE \textbf{Token selection.} $\mathcal{S}\leftarrow \bigcup_{c\in\mathcal{C}} \mathrm{TopK}\!\left(E[c],\, q_c\right)$ and
  $\tilde{Z}_b \leftarrow Z_b[\mathcal{S}]$
  \STATE optionally: append sink tokens (mean and residual) from $Z_b[\bar{\mathcal{S}}]$ (Sec.~\ref{sec:sinks})
\ENDFOR
\STATE \textbf{return} $\tilde{Z}$
\end{algorithmic}
\end{algorithm}

\paragraph{Energy-guided budget allocation.}
Token budget is distributed across leaf crops in proportion to their energy
mass. For each $c\in\mathcal{C}$, define
\begin{equation}
M(c)=\sum_{(r,t)\in c} E_{r,t},
\label{eq:energy_mass}
\end{equation}
and let $|c|$ denote the number of tokens in crop $c$. A fractional quota is
given by
\begin{equation}
\hat{q}_c = K \cdot \frac{M(c)}{\sum_{c'\in\mathcal{C}} M(c')}.
\end{equation}

\noindent This proportional rule admits an optimization interpretation:
\begin{proposition}[Energy-proportional allocation as proportional fairness]
\label{prop:allocation}
Assume $M(c)>0$ for all $c\in\mathcal{C}$. The fractional quota
$\{\hat{q}_c\}_{c\in\mathcal{C}}$ defined by
\begin{equation}
\hat{q}_c = K \cdot \frac{M(c)}{\sum_{c'\in\mathcal{C}} M(c')}
\label{eq:qhat}
\end{equation}
is the unique optimizer of the concave program~\citep{kelly1998rate}
\begin{equation}
\max_{\{q_c>0\}}\ \sum_{c\in\mathcal{C}} M(c)\,\log q_c
\quad\text{s.t.}\quad \sum_{c\in\mathcal{C}} q_c = K.
\label{eq:pf_alloc}
\end{equation}
\end{proposition}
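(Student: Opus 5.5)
The argument is standard convex optimization: show the program is strictly concave on an open convex feasible set, solve the Lagrange stationarity conditions, and conclude that the unique stationary point is the global maximizer.

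Write $M_{\mathrm{tot}}=\sum_{c'\in\mathcal{C}}M(c')>0$. The feasible set $\mathcal{Q}=\{q\in\mathbb{R}^{|\mathcal{C}|}_{>0}:\sum_c q_c=K\}$ is the relative interior of a scaled simplex, so it is convex and nonempty when $K>0$. The objective $F(q)=\sum_c M(c)\log q_c$ is a sum of separable terms. Each term is strictly concave because $M(c)>0$. Hence $F$ is strictly concave on $\mathbb{R}^{|\mathcal{C}|}_{>0}$, and therefore on $\mathcal{Q}$, so the program has at most one maximizer.

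For existence and identification, form the Lagrangian $\mathcal{L}(q,\lambda)=\sum_c M(c)\log q_c-\lambda\bigl(\sum_c q_c-K\bigr)$. Stationarity in $q_c$ gives $M(c)/q_c=\lambda$, so $q_c=M(c)/\lambda$. The equality constraint then forces $\lambda=M_{\mathrm{tot}}/K$, which yields $q_c=K\,M(c)/M_{\mathrm{tot}}=\hat q_c$. This point is strictly positive and hence lies in $\mathcal{Q}$.

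To certify global optimality without invoking KKT sufficiency abstractly, I would compare directly. For any $q\in\mathcal{Q}$, set $p_c=M(c)/M_{\mathrm{tot}}$ and $r_c=q_c/K$; both are probability vectors. Then
\[
F(\hat q)-F(q)=M_{\mathrm{tot}}\sum_c p_c\log\frac{p_c}{r_c}=M_{\mathrm{tot}}\,\mathrm{KL}(p\,\|\,r)\ge 0.
\]
By Gibbs' inequality, equality holds iff $r=p$, that is, iff $q=\hat q$. This single computation gives both optimality and uniqueness at once, so I would present it as the main line and the Lagrangian as motivation.

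The one delicate point is that the constraint set is open, since $q_c>0$ is strict, so existence of a maximizer is not automatic by compactness. The KL identity above sidesteps this by exhibiting the maximizer explicitly, so I do not expect a real obstacle. I would also note that $F\to-\infty$ whenever some $q_c\to0$, which explains why the positivity constraints are never active.
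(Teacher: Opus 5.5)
Your proof is correct. The Lagrangian step is exactly the paper's: it solves $M(c)/q_c=\lambda$ and fixes $\lambda=\sum_c M(c)/K$ from the budget. The paper then simply cites strict concavity for uniqueness. It leaves implicit why the stationary point is a maximizer at all. It is one, because $\mathcal{L}(\cdot,\lambda)$ is concave on the open orthant, so $F(q)=\mathcal{L}(q,\lambda)\le\mathcal{L}(\hat q,\lambda)=F(\hat q)$ for every feasible $q$.

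Where you genuinely depart is in making $F(\hat q)-F(q)=M_{\mathrm{tot}}\,\mathrm{KL}(p\,\|\,r)$ the main certificate. This is the same first-order reasoning written out exactly. Since $\nabla F(\hat q)=\lambda\mathbf{1}$ is orthogonal to every feasible direction, the gap is the Bregman divergence of $-F$, which for the weighted logarithm is a scaled KL divergence.

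What your route buys:
\begin{itemize}
\item Existence, global optimality and uniqueness all follow from one self-contained application of Gibbs' inequality. This makes your separate strict-concavity paragraph redundant and removes any worry about the open feasible set.
\item You get an exact formula for the suboptimality of any positive feasible allocation.
\item You make explicit the assumption $K>0$, which the paper leaves tacit.
\end{itemize}

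The paper's route is shorter. It is also the generic convex-programming template, which carries over unchanged to other separable concave utilities.
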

We provide the proof and integer rounding details in Appendix~\ref{app:quota_rounding}.

An initial integer allocation is obtained by flooring and clipping to the
feasible range,
\begin{equation}
q_c \leftarrow \min\!\bigl(|c|,\ \lfloor \hat{q}_c \rfloor\bigr),
\label{eq:quota}
\end{equation}
which allows $q_c=0$ when the budget is smaller than the number of crops.
Since flooring and clipping imply $\sum_{c\in\mathcal{C}} q_c \le K$, the
remaining budget $K-\sum_c q_c$ is distributed to crops in descending order of
$S(c)$ among those with $q_c<|c|$, without exceeding $|c|$, until
$\sum_{c\in\mathcal{C}} q_c = K$. Overall, the procedure enforces the global
budget while maintaining $0\le q_c\le |c|$.

\paragraph{Token selection within each crop.}
Within each crop $c$, the $q_c$ highest-energy tokens are retained:
\begin{equation}
\mathcal{S}=\bigcup_{c\in\mathcal{C}} \mathrm{TopK}\bigl(E[c], q_c\bigr),
\qquad |\mathcal{S}|=K.
\end{equation}
The pruned token sequence is then $\tilde{Z}=Z[\mathcal{S}]$, where indices in
$\mathcal{S}$ follow the original token order.

\paragraph{Discussion.}
The quadtree split enforces spatial adaptivity: salient regions (high-variation
energy) are represented at a finer granularity, while background regions remain
coarse. Coupled with energy-proportional allocation, this yields an efficient
subset that preserves both semantic relevance (via $E$) and coverage (via the
multi-scale partition).

\subsection{Sink tokens for preserving pruned information}
\label{sec:sinks}

Quadtree pruning keeps a budgeted subset of tokens $\mathcal{S}$ but discards the
complement $\bar{\mathcal{S}}$. While tokens in $\bar{\mathcal{S}}$ are assigned
lower energy, they may still contain complementary context (e.g., background
cues or fine-grained attributes). To mitigate information loss with negligible
overhead, we compress the discarded set into a small number of additional
\emph{sink} tokens and feed them together with the selected tokens to the
decoder. Let Z $\in\mathbb{R}^{N\times d}$ denote the token features, $\tilde{Z}=Z[\mathcal{S}]$
the selected tokens, and $Z_{\mathrm{p}}=Z[\bar{\mathcal{S}}]$ the pruned tokens.
We summarize $Z_{\mathrm{p}}$ using two coreset-style sinks~\citep{feldman2011unified}:
\begin{equation}
u_{\mathrm{mean}} = \frac{1}{|\bar{\mathcal{S}}|}\sum_{i\in \bar{\mathcal{S}}} Z_i,
\qquad
u_{\mathrm{res}} = Z_{\arg\max_{i\in \bar{\mathcal{S}}}\ \lVert Z_i-u_{\mathrm{mean}}\rVert_2}.
\label{eq:mean_res_sinks}
\end{equation}
The mean sink captures the overall context of discarded tokens, while the
residual sink promotes diversity by retaining a representative token that is
poorly explained by the mean. We append the sink tokens to the selected tokens to form the final visual token
sequence
\begin{equation}
Z' = [\tilde{Z};\, u_{\mathrm{mean}};\, u_{\mathrm{res}}],
\end{equation}
which is passed to the language model.

\begin{table*}[t]
    \centering
    \setlength{\tabcolsep}{3.5pt}
    \footnotesize
    \caption{\textbf{Comparison of pruning methods on image-understanding benchmarks.} We report the accuracy and average performance at varying pruning ratios. The best performance is marked in \textcolor{Maroon}{\textbf{red}}.}
    \vspace{-0.25em}
    \label{tab1:main}
    \resizebox{\linewidth}{!}{
    \begin{tabular}{l | *{9}{>{\centering\arraybackslash}p{0.92cm}} |>{\centering\arraybackslash}p{1.15cm}}
        \toprule
        \textbf{\;Methods} & \textbf{GQA} & \textbf{MMB} & \textbf{MMB}$_{\text{CN}}$ & \textbf{MME} & \textbf{POPE} & \textbf{SQA} & \textbf{VQA}$_{\text{V2}}$ & \textbf{VQA}$_{\text{Text}}$ & \textbf{VizWiz}  & \makecell[c]{\textbf{Average}}\\
        \midrule
        
        \textcolor{gray}{Upper Bound, 576 Tokens} & \textcolor{gray}{61.9} & \textcolor{gray}{64.7} & \textcolor{gray}{58.1} & \textcolor{gray}{1862} & \textcolor{gray}{85.9} & \textcolor{gray}{69.5} & \textcolor{gray}{78.4} & \textcolor{gray}{58.2} & \textcolor{gray}{50.0} & \multirow{1}*{\textcolor{gray}{100\%}} \\
        \midrule

        \rowcolor{mygray}
        LLaVA-1.5~\citep{liu2024llava1.5} \textcolor{gray}{7B} & \multicolumn{10}{c}{\textit{Retain 192 Tokens} \ $\fg{(\downarrow 66.7\%)}$}\\
        ToMe~\citep{bolya2022token} \texttt{\scriptsize{(ICLR23)}} & 54.3 & 60.5 & - & 1563 & 72.4 & 65.2 & 68.0 & 52.1 & - & \multirow{1}*{88.5\%} \\
        FastV~\citep{chen2024image} \texttt{\scriptsize{(ECCV24)}} & 52.7 & 61.2 & 57.0 & 1612 & 64.8 & 67.3 & 67.1 & 52.5 & 50.8  & \multirow{1}*{90.5\%} \\
        LLaVA-PruMerge~\citep{shang2025llava} \texttt{\scriptsize{(ICCV25)}}\;\; & 54.3 & 59.6 & 52.9 & 1632 & 71.3 & 67.9 & 70.6 & 54.3 & 50.1  & 91.4\% \\
        PDrop~\citep{xing2024pyramiddrop} \texttt{\scriptsize{(CVPR25)}} & 57.1 & 63.2 & 56.8 & 1766 & 82.3 & 68.8 & 75.1 & 56.1 & 51.1  & 96.7\% \\
      
        HiRED~\citep{arif2025hired} \texttt{\scriptsize{(AAAI25)}} & 58.7 & 62.8 & 54.7 & 1737 & 82.8 & 68.4 & 74.9 & 47.4 & 50.1  & 94.6\%     \\
        VisionZip~\citep{yang2025visionzip} \texttt{\scriptsize{(CVPR25)}} & \textcolor{Maroon}{\textbf{59.3}} & 64.5 & {57.3} & 1767 &86.4 &68.9 & \textcolor{Maroon}{\textbf{76.8}} & 57.3 & \textcolor{Maroon}{\textbf{51.6}}  & 98.1\%  \\
        SparseVLM~\citep{zhang2024sparsevlm} \texttt{\scriptsize{(ICML25)}} & 57.6 & 62.5 & 53.7 & 1721 & {83.6} & 69.1 & 75.6 & 56.1 & 50.5  & 96.1\% \\
        DART~\citep{wen2025stop} \texttt{\scriptsize{(EMNLP25)}} & 58.9 & 63.6 & \textcolor{Maroon}{\textbf{57.0}} & \textcolor{Maroon}{\textbf{1856}} & 82.8 & {69.8} & 76.7 & 57.4 & 51.1  & 98.5\% \\

        HoloV~\citep{zou2025don}  \texttt{\scriptsize{(NeurIPS25)}}  & 58.6 & {{64.6}} & {{56.9}} & {1793} & {{85.6}} & {{69.1}} & 76.1 & {{55.8}} & 51.4  & {{98.2\%}} \\
        \rowcolor{mygreen2}
         SpecFlow \scriptsize{(Ours)} & 58.3 & \textcolor{Maroon}{\textbf{65.8}} & 56.1 & 1827 & \textcolor{Maroon}{\textbf{85.8}} & \textcolor{Maroon}{\textbf{69.7}}& \textcolor{Maroon}{\textbf{76.4}} & \textcolor{Maroon}{\textbf{57.9}} & 50.5 & \textcolor{Maroon}{\textbf{98.7\%}} \\
        \midrule

        \rowcolor{mygray}
        LLaVA-1.5~\citep{liu2024llava1.5} \textcolor{gray}{7B} & \multicolumn{10}{c}{\textit{Retain 128 Tokens} \ $\fg{(\downarrow 77.8\%)}$}\\
        ToMe~\citep{bolya2022token} \texttt{\scriptsize{(ICLR23)}} & 52.4 & 53.3 & - & 1343 & 62.8 & 59.6 & 63.0 & 49.1 & - & \multirow{1}*{80.4\%} \\
        FastV~\citep{chen2024image} \texttt{\scriptsize{(ECCV24)}} & 49.6 & 56.1 & 56.4 & 1490 & 59.6 & 60.2 & 61.8 & 50.6 & 51.3  & \multirow{1}*{85.4\%}\\
    
        LLaVA-PruMerge~\citep{shang2025llava} \texttt{\scriptsize{(ICCV25)}} & 53.3 & 58.1 & 51.7 & 1554 & 67.2 & 67.1 & 68.8 & 54.3 & 50.3  & 89.4\%  \\
       
        PDrop~\citep{xing2024pyramiddrop} \texttt{\scriptsize{(CVPR25)}} & 56.0 & 61.1 & 56.6 & 1644 & {82.3} & 68.3 & 72.9 & 55.1 & 51.0 & 94.9\% \\
    
        HiRED~\citep{arif2025hired} \texttt{\scriptsize{(AAAI25)}} & 57.2 & 61.5 & 53.6 & 1710 & 79.8 & 68.1 & 73.4 & 46.1 & 51.3  & 93.1\% \\
        VisionZip~\citep{yang2025visionzip} \texttt{\scriptsize{(CVPR25)}} & 57.6 & 63.4 & {56.7} & 1768 &84.7 &68.8 & 75.6 & 56.8 & 52.0  & 97.2\% \\
        SparseVLM~\citep{zhang2024sparsevlm}
         \texttt{\scriptsize{(ICML25)}} & 56.0 & 60.0 & 51.1 & 1696 & 80.5 & 67.1 & 73.8 & 54.9 & 51.4 & 93.8\% \\
        DART~\citep{wen2025stop} \texttt{\scriptsize{(EMNLP25)}} & \textcolor{Maroon}{\textbf{57.9}} & 63.2 & \textcolor{Maroon}{\textbf{57.0}} & \textcolor{Maroon}{\textbf{1845}} & 80.1 & 69.1 & \textcolor{Maroon}{\textbf{75.9}} & 56.4 & 51.7  & 97.5\% \\
         HoloV~\citep{zou2025don}\texttt{\scriptsize{(NeurIPS25)}}  & 57.4 & {{62.8}} & {{55.0}} & 1768 & {{83.2}} & {{69.1}} & 74.8 & {{55.7}} & 52.3  & {{96.8\%}} \\
         \rowcolor{mygreen2}
         SpecFlow \scriptsize{(Ours)} & 57.6 & \textcolor{Maroon}{\textbf{64.3}} & 55.9 & 1794 & \textcolor{Maroon}{\textbf{84.9}} & \textcolor{Maroon}{\textbf{69.9}} & {{75.3}} & \textcolor{Maroon}{\textbf{56.8}} & 51.1 & \textcolor{Maroon}{\textbf{97.8\%}} \\
        \midrule

        \rowcolor{mygray}
        LLaVA-1.5~\citep{liu2024llava1.5} \textcolor{gray}{7B} & \multicolumn{10}{c}{\textit{Retain 64 Tokens} \ $\fg{(\downarrow 88.9\%)}$}\\
        ToMe~\citep{bolya2022token} \texttt{\scriptsize{(ICLR23)}} & 48.6 & 43.7 & - & 1138 & 52.5 & 50.0 & 57.1 & 45.3 & -  & \multirow{1}*{70.1\%}\\
        FastV~\citep{chen2024image} \texttt{\scriptsize{(ECCV24)}} & 46.1 & 48.0 & 52.7 & 1256 & 48.0 & 51.1 & 55.0 & 47.8 & 50.8  & 76.7\% \\
        LLaVA-PruMerge~\citep{shang2025llava} \texttt{\scriptsize{(ICCV25)}} & 51.9 & 55.3 & 49.1 & 1549 & 65.3 & 68.1 & 67.4 & 54.0 & 50.1 & 87.7\% \\
        PDrop~\citep{xing2024pyramiddrop} \texttt{\scriptsize{(CVPR25)}} & 41.9 & 33.3 & 50.5 & 1092 & 55.9 & 68.6 & 69.2 & 45.9 & 50.7 & 77.5\% \\
        HiRED~\citep{arif2025hired} \texttt{\scriptsize{(AAAI25)}} & 54.6 & 60.2 & 51.4 & 1599 & 73.6 & 68.2 & 69.7 & 44.2 & 50.2  & 89.4\% \\
        VisionZip~\citep{yang2025visionzip} \texttt{\scriptsize{(CVPR25)}} & 55.1 & 60.1 & \textcolor{Maroon}{\textbf{55.4}} & 1690 & 77.0 & 69.0 & 72.4 & \textcolor{Maroon}{\textbf{55.5}} & {{52.9}}  & 94.5\% \\
        SparseVLM~\citep{zhang2024sparsevlm} \texttt{\scriptsize{(ICML25)}} & 52.7 & 56.2 & 46.1 & 1505 & 75.1 & 62.2 & 68.2 & 51.8 & 50.1  & 87.3\% \\
        DART~\citep{wen2025stop} \texttt{\scriptsize{(EMNLP25)}} & \textcolor{Maroon}{\textbf{55.9}} & 60.6 & 53.2 & \textcolor{Maroon}{\textbf{1765}} & 73.9 & \textcolor{Maroon}{\textbf{69.8}} & 72.4 & 54.4 & 51.6  & 93.9\% \\
         HoloV~\citep{zou2025don}\texttt{\scriptsize{(NeurIPS25)}}  & 55.1 & {{61.3}} & 53.7 & 1728 & {{80.3}} & 69.5 & {{72.2}} & 54.2 & \textcolor{Maroon}{\textbf{53.1}} & {{94.9\%}} \\
        \rowcolor{mygreen2}
         SpecFlow \scriptsize{(Ours)} & 55.3 & \textcolor{Maroon}{\textbf{63.7}} & 54.3 & 1713 & \textcolor{Maroon}{\textbf{80.5}} & {{69.7}} & \textcolor{Maroon}{\textbf{73.7}} & 54.9 & {{52.4}} & \textcolor{Maroon}{\textbf{95.6\%}} \\
         \bottomrule
	\end{tabular}
    }
    \vspace{-1.0em}
\end{table*}

\subsection{Computational Complexity}
\label{sec:complexity}

We express inference cost in FLOPs and concentrate on visual tokens, as the text
prompt is usually much shorter than the visual sequence (i.e., $T\ll N$). Let
$N$ denote the number of visual tokens produced by the encoder and $K$ the number
kept after pruning. Denote the hidden size by $h$ and the FFN intermediate size by $d$. Coefficients $a,b,c$ are implementation-dependent constants.

\paragraph{Prefill FLOPs.}
During prefill, the per-layer FLOPs as a function of the attended visual length
$n$ can be modeled as
\begin{equation}
\Psi_{\mathrm{pre}}(n)\ :=\ a\,h\,n^2 \;+\; n\,(b h^2 + c h d),
\end{equation}
where the first term corresponds to attention and the latter terms capture
per-token projections and the FFN. Pruning replaces $n=N$ with $n=K$, hence the
relative prefill saving is
\begin{equation}
\Delta_{\mathrm{pre}}
:= 1-\tfrac{\Psi_{\mathrm{pre}}(K)}{\Psi_{\mathrm{pre}}(N)}
= 1-\tfrac{a h K^2 + K(b h^2 + c h d)}{a h N^2 + N(b h^2 + c h d)}.
\end{equation}
Let $K=(1-R)N$. When attention dominates (typical for large $N$), the $n^2$ term
governs and the reduction simplifies to
\begin{equation}
\Delta_{\mathrm{pre}} \approx 1-\left(\tfrac{K}{N}\right)^2 = 1-(1-R)^2 = 2R-R^2.
\end{equation}
highlighting that prefill gains are superlinear in the pruning ratio $R$ under
this regime.

\paragraph{Decode FLOPs.}
With KV caching, each decoding step scales approximately linearly with the
context length:
\begin{equation}
\Psi_{\mathrm{dec}}(n)\ :=\ b h^2 \;+\; n\,(b h + c h d).
\end{equation}
Replacing $N$ by $K$ gives
\begin{equation}
\Delta_{\mathrm{dec}}
:= 1-\tfrac{\Psi_{\mathrm{dec}}(K)}{\Psi_{\mathrm{dec}}(N)}
= 1-\tfrac{b h^2 + K(b h + c h d)}{b h^2 + N(b h + c h d)}
\approx R.
\end{equation}
since the constant term $b h^2$ does not shrink with pruning.

\paragraph{Cost of pruning.}
Pruning is performed once per input; in particular, the $k$NN graph is computed
only once before decoder-layer processing. The overhead comprises $k$NN graph
construction (worst-case $\mathcal{O}(N^2 h)$ FLOPs), sparse diffusion
$\mathcal{O}(T_{\mathrm{diff}}Nk)$, and quadtree splitting/TopK selection
near-linear in $N$ (i.e., $\tilde{\mathcal{O}}(N)$, or $\mathcal{O}(DN)$ with
depth $D$). As these costs are not incurred per decoder layer, they are
amortized by the per-layer savings from reducing the attended length from $N$ to
$K$, especially during prefill.
A more detailed quantitative breakdown of the pruning overhead, end-to-end
latency, and memory footprint, together with concrete numerical instantiations
of the FLOPs formulas above, is provided in
Appendix~\ref{appendix:complexity}.

\section{Experiments}
\label{sec:exp}

\subsection{Experimental Setup}
\textbf{Benchmarks.} We validate our approach on a diverse suite of established benchmarks for visual understanding. 
For image understanding task, we consider ten datasets, including GQA \citep{hudson2019gqa}, MMBench (MMB) and MMB-CN \citep{liu2024mmbench}, MME \citep{fu2023mme}, POPE \citep{li2023evaluating}, VizWiz \citep{bigham2010vizwiz}, ScienceQA (SQA) \citep{lu2022learn}, VQA v2 (VQA$_{\text{V2}}$) \citep{goyal2017making} and TextVQA (VQA$_{\text{Text}}$) \citep{singh2019towards}. 
For video understanding task, we additionally evaluate on MSVD-QA and MSRVTT-QA \citep{xu2017video}. 
We follow the official evaluation protocols and default settings for each benchmark. 
Further benchmark details are provided in Appendix~\ref{apx:dataset}.

\subsection{Main Results}

\paragraph{Image Understanding Tasks.}

\label{sec:main_image}
~\cref{tab1:main} summarizes the main results on nine image-understanding benchmarks under
different token budgets. Overall, {SpecFlow} consistently achieves a strong
accuracy--efficiency trade-off across datasets and compression regimes. At a {66.7\%} token
compression ratio, {SpecFlow} attains the best average relative performance
({98.7\%}), incurring only a {1.3\%} drop from the uncompressed full-token baseline
(100\%), while outperforming recent pruning baselines such as VisionZip, SparseVLM, and HoloV, and
remaining competitive on individual benchmarks. 
Notably, on MMBench and SQA, {SpecFlow} even
surpasses the unpruned baseline. As the token budget becomes more stringent, {SpecFlow}
degrades gracefully and continues to deliver the best averages at {77.8\%} and
{88.9\%} token compression ratios ({97.8\%} and {95.6\%}, respectively).

\begin{table*}[t]
    \centering
    \setlength{\tabcolsep}{3.5pt}
    \footnotesize
    \caption{\textbf{Comparison of pruning methods on Video QA benchmarks.} We report the accuracy and average performance at 88.9\% pruning ratios. The best performance is marked in \textcolor{Maroon}{\textbf{red}}.}
    \label{tab2:main}
    \vspace{0.25em}
    \resizebox{\linewidth}{!}{
    \begin{tabular}{l | *{9}{>{\centering\arraybackslash}p{0.92cm}} | >{\centering\arraybackslash}p{1.15cm}}
        \toprule
        \textbf{\;Methods} & \textbf{GQA} & \textbf{MMB} & \textbf{MMB}$_{\text{CN}}$ & \textbf{MME} & \textbf{POPE} & \textbf{SQA} & \textbf{VQA}$_{\text{V2}}$ & \textbf{VQA}$_{\text{Text}}$ & \textbf{VizWiz}  & \makecell[c]{\textbf{Average}}\\
        \midrule
        
         \textcolor{gray}{Upper Bound, 2880 Tokens} & \textcolor{gray}{64.2} & \textcolor{gray}{67.4} & \textcolor{gray}{60.6} & \textcolor{gray}{1851} & \textcolor{gray}{86.5} & \textcolor{gray}{70.1} & \textcolor{gray}{81.8} & \textcolor{gray}{64.9} & \textcolor{gray}{57.6} &  \textcolor{gray}{100\%} \\
          \midrule
          
       \rowcolor{mygray}
        LLaVA-NeXT~\citep{liu2024llavanext} \textcolor{gray}{7B} & \multicolumn{10}{c}{\textit{Retain 320 Tokens} \ $\fg{(\downarrow 88.9\%)}$} \\

        FastV~\citep{chen2024image} \texttt{\scriptsize{(ECCV24)}} & 55.9 & 61.6 & 51.9 & 1661 & 71.7 & 62.8 & 71.9 & 55.7 & 53.1  & 88.0\% \\

        LLaVA-PruMerge~\citep{shang2025llava} \texttt{\scriptsize{(ICCV25)}}\;\; & 53.6 & 61.3 & 55.3 & 1534 & 60.8 & 66.4 & 69.7 & 50.6 & 54.0  & 85.6\% \\

       PDrop~\citep{xing2024pyramiddrop} \texttt{\scriptsize{(CVPR25)}} & 56.4 & 63.4 & 56.2 & 1663 & 77.6 & 67.5 & 73.5 & 54.4 & 54.1  & 90.9\% \\

       FasterVLM~\citep{zhang2024cls} \texttt{\scriptsize{(ICCV25)}} & 56.9 & 61.6 & 53.5 & 1701 & 83.6 & 66.5 & 74.0 & 56.5 & 52.6  & 91.1\% \\
        
        HiRED~\citep{arif2025hired} \texttt{\scriptsize{(AAAI25)}} & 59.3 & 64.2 & 55.9 & 1690 & 83.3 & 66.7 & 75.7 & {58.8} & 54.2  & 93.3\% \\

       SparseVLM~\citep{zhang2024sparsevlm} \texttt{\scriptsize{(ICML25)}} & 56.1 & 60.6 & 54.5 & 1533 & 82.4 & 66.1 & 71.5 & 58.4 & 52.0  & 89.7\%  \\
    
       DART~\citep{wen2025stop} \texttt{\scriptsize{(EMNLP25)}} & 61.7 & 65.3 & \textcolor{Maroon}{\textbf{58.2}} & 1710 & {{84.1}} & 68.4 & 79.1 & 58.7 & \textcolor{Maroon}{\textbf{56.1}} & 93.9\%  \\

        HoloV~\citep{zou2025don}\texttt{\scriptsize{(NeurIPS25)}}& {{61.7}} & {{65.3}} & 57.5 & \textcolor{Maroon}{\textbf{1738}} & 83.9 & \textcolor{Maroon}{\textbf{68.9}} & {{79.5}} & {58.7} & 55.3 & {{95.6\%}}  \\
       \rowcolor{mygreen2}SpecFlow\texttt{\scriptsize{(Ours)}}& \textcolor{Maroon}{\textbf{62.5}} & \textcolor{Maroon}{\textbf{66.7}} & 56.8 & {{1707}} & \textcolor{Maroon}{\textbf{85.0}}& {{68.6}} & \textcolor{Maroon}{\textbf{80.1}} & \textcolor{Maroon}{\textbf{59.5}} & 54.2 & \textcolor{Maroon}{\textbf{95.8\%}}  \\
        \bottomrule
	\end{tabular} 
    }
\end{table*}

\paragraph{Results on LLaVA-NeXT.}
For a more thorough evaluation, we further benchmark {SpecFlow} on LLaVA-NeXT across the same suite of datasets and compare against current state-of-the-art training-free pruning methods. Since LLaVA-NeXT adopts a revised image encoding pipeline, the number of visual tokens varies across inputs. To ensure a consistent evaluation protocol, we fix the token budget at 320 (pruned from up to 2880 raw tokens). As reported in Table~\ref{tab2:main}, {SpecFlow} delivers the strongest overall performance, while retaining {95.8\%} of the unpruned baseline performance with an {88.9\%} reduction in visual tokens.

\begin{table}[t]
\centering
\vspace{-0.25em}
\caption{\small Video QA Evaluations of different methods with 455 visual tokens retained.}
\vspace{-0.25em}

\scalebox{0.78}{
\begin{tabular}{@{}l|cccccc@{}}
\toprule
\multirow{2}{*}{\textbf{Methods}} & \multicolumn{2}{c}{\textbf{MSVD-QA}} & \multicolumn{2}{c}{\textbf{MSRVT-QA}} & \multicolumn{2}{c}{\textbf{Average}}\\
& Acc. & Score & Acc. & Score & Acc. & Score\\ \midrule
Video-LLaVA \textcolor{gray}{7B} & 70.8 & 3.93 & 57.5 & 3.55 & 64.2 & 3.74 \\
\midrule
FastV \texttt{\scriptsize(ECCV24)} & 68.2 & 3.75 & 54.1 & 3.42 & 61.2 & 3.59 \\
SparseVLM \texttt{\scriptsize(ICML25)} & 69.4 & 3.89 & 54.8 & 3.42 & 62.1 & 3.66 \\
HoloV \texttt{\scriptsize(NeurIPS25)} & 69.3 & 3.90 & 56.2 & 3.50 & 62.8 & 3.70 \\
\rowcolor{mygreen2}
SpecFlow \scriptsize{(Ours)} & \textcolor{Maroon}{\textbf{70.3}} & \textcolor{Maroon}{\textbf{3.94}} & \textcolor{Maroon}{\textbf{56.5}} & \textcolor{Maroon}{\textbf{3.50}} & \textcolor{Maroon}{\textbf{63.4}} & \textcolor{Maroon}{\textbf{3.72}} \\
\bottomrule
\end{tabular}}
\vspace{-1em}
\label{tab:main_table_video}
\end{table}

\paragraph{Video Understanding Tasks.}
We evaluate our method on two widely used video question-answering benchmarks. Following the protocol in~\citep{chen2024image,zhang2024sparsevlm}, we report results on the first 1{,}000 samples of each benchmark and use the Video-ChatGPT~\citep{maaz2024video} score as the primary metric. In~\cref{tab:main_table_video}, we treat Video-LLaVA with 2{,}048 video tokens as the unpruned upper bound, normalized to an average accuracy of 100.0\% and a score difference of +0.00. For a fair comparison, all pruning methods retain 455 visual tokens (77.8\% pruning ratio). Under this setting, {SpecFlow} preserves performance close to the unpruned baseline and consistently outperforms FastV, SparseVLM, and HoloV. These results indicate that {SpecFlow} remains effective for video inputs with temporal dynamics, producing accurate answers while substantially reducing token usage.

\begin{figure}[t]
	\includegraphics[width=\linewidth]{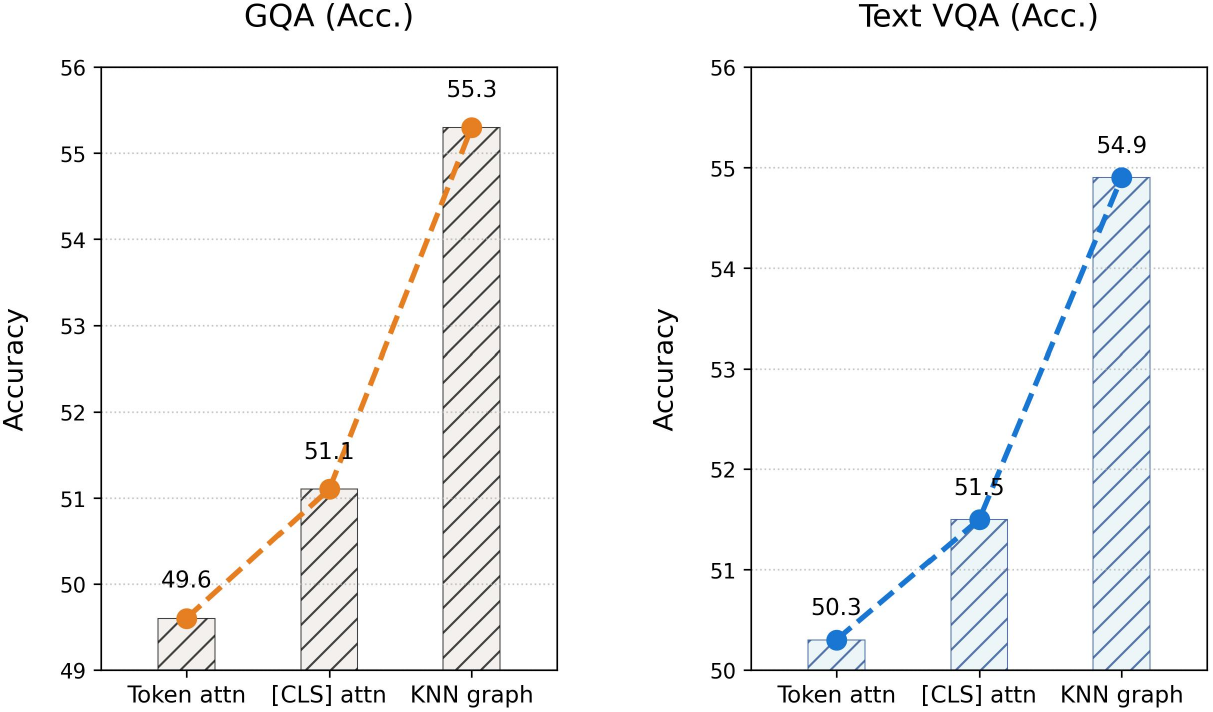}
\caption{Ablation study on diffusion operators. Our KNN graph (built from token feature similarity) outperforms raw self-attention (Token attn) and \texttt{[CLS]}-initialized attention propagation (\texttt{[CLS]} attn).}
    \label{fig:diffusion_ablation}
	\vspace{-5mm}
\end{figure}

\begin{figure}[t]
	\includegraphics[width=\linewidth]{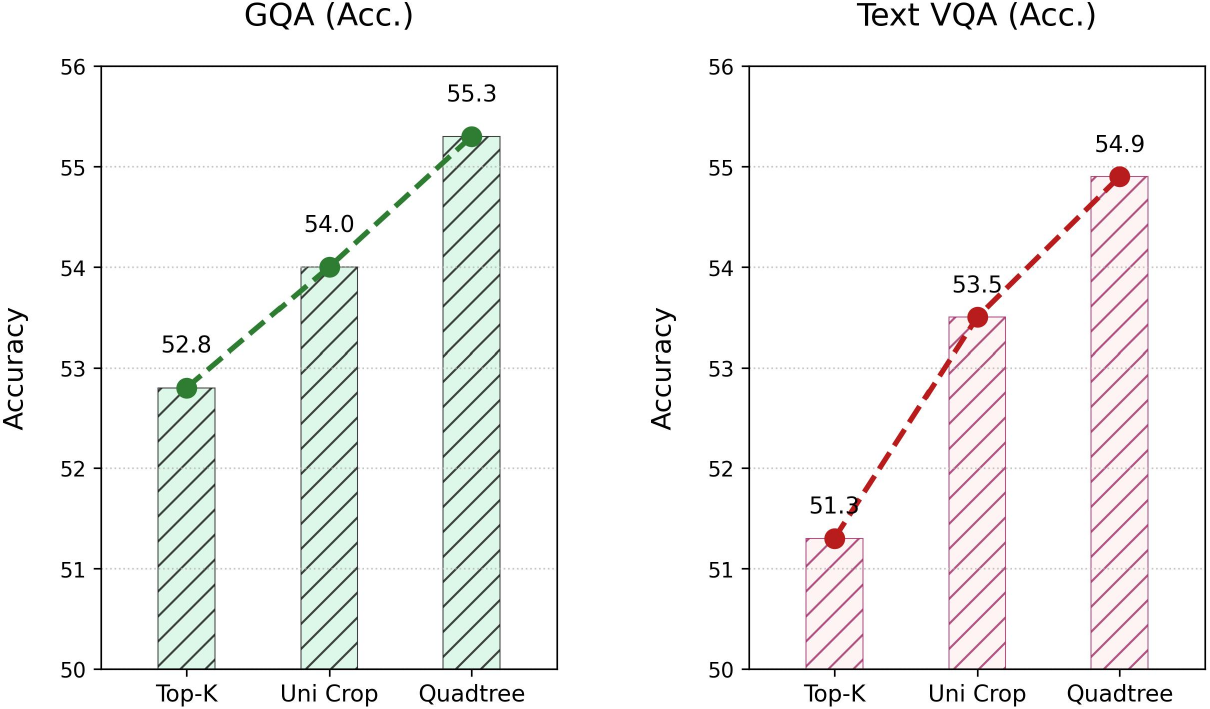}
\caption{Ablation on pruning strategies. Our Quadtree outperforms Top-K (direct selection of top-K tokens) and Uni Crop (uniformly partitioned crops), validating adaptive spatial pruning.}
    \label{fig:diffusion_ablation}
	\vspace{-5mm}
\end{figure}

\paragraph{Results on Qwen2.5-VL.}
\label{sec:qwen}
To assess architectural generality beyond the LLaVA family, we further evaluate {SpecFlow} on Qwen2.5-VL-7B, which differs from LLaVA in both the visual stack and the multimodal projector. As reported in Table~\ref{tab:qwen}, {SpecFlow} consistently outperforms HoloV across all three pruning ratios, with the margin growing under more aggressive compression (e.g., 89.4\% vs.\ 87.3\% relative accuracy at 88.9\% pruning). This indicates that the proposed mechanism, which combines graph diffusion with adaptive regional allocation and sink-based context preservation, generalizes well to different MLLM architectures and is not tied to a specific CLIP-style encoder or projector design.

\begin{table}[t]
\centering
\vspace{-0.25em}
\caption{ SpecFlow Generalization on Qwen2.5-VL.}
\vspace{-0.25em}

\scalebox{0.78}{
\begin{tabular}{@{}l|ccccc@{}}
\toprule
\textbf{Methods} & \textbf{MME} & \textbf{POPE} & \textbf{SQA} & \textbf{VQA}$_{\textbf{Text}}$ & \textbf{Avg.} \\
\midrule
\textcolor{gray}{Upper Bound}  & \textcolor{gray}{2304} & \textcolor{gray}{86.1} & \textcolor{gray}{84.7} & \textcolor{gray}{84.8} & \textcolor{gray}{100.0\%} \\
\midrule
 \rowcolor{mygray}
Qwen2.5-VL-7B & \multicolumn{5}{c}{\textit{Retain 192 Tokens ($\downarrow$66.7\%)}} \\
HoloV \texttt{\scriptsize(NeurIPS25)} & 2066 & 85.0 & 79.1 & 77.3 & 93.2\% \\
\rowcolor{mygreen2}
SpecFlow \scriptsize{(Ours)} & \textcolor{Maroon}{\textbf{2102}} & \textcolor{Maroon}{\textbf{85.2}} & \textcolor{Maroon}{\textbf{80.0}} & \textcolor{Maroon}{\textbf{79.2}} & \textcolor{Maroon}{\textbf{94.5\%}} \\
\midrule
 \rowcolor{mygray}
Qwen2.5-VL-7B & \multicolumn{5}{c}{\textit{Retain 128 Tokens ($\downarrow$77.8\%)}} \\

HoloV \texttt{\scriptsize(NeurIPS25)} & 2029 & 81.5 & 79.1 & 69.2 & 89.4\% \\
\rowcolor{mygreen2}
SpecFlow \scriptsize{(Ours)} & \textcolor{Maroon}{\textbf{2051}} & \textcolor{Maroon}{\textbf{83.7}} & \textcolor{Maroon}{\textbf{79.9}} & \textcolor{Maroon}{\textbf{73.8}} & \textcolor{Maroon}{\textbf{91.9\%}} \\
\midrule
 \rowcolor{mygray}
Qwen2.5-VL-7B & \multicolumn{5}{c}{\textit{Retain 64 Tokens ($\downarrow$88.9\%)}} \\

HoloV \texttt{\scriptsize(NeurIPS25)} & 1998 & 80.7 & 79.5 & 63.6 & 87.3\% \\
\rowcolor{mygreen2}
SpecFlow \scriptsize{(Ours)} & \textcolor{Maroon}{\textbf{2015}} & \textcolor{Maroon}{\textbf{81.1}} & \textcolor{Maroon}{\textbf{79.7}} & \textcolor{Maroon}{\textbf{69.4}} & \textcolor{Maroon}{\textbf{89.4\%}} \\
\bottomrule
\end{tabular}}
\vspace{-1em}
\label{tab:qwen}
\end{table}

\begin{figure*}
    \centering 
    \includegraphics[width=1\linewidth]{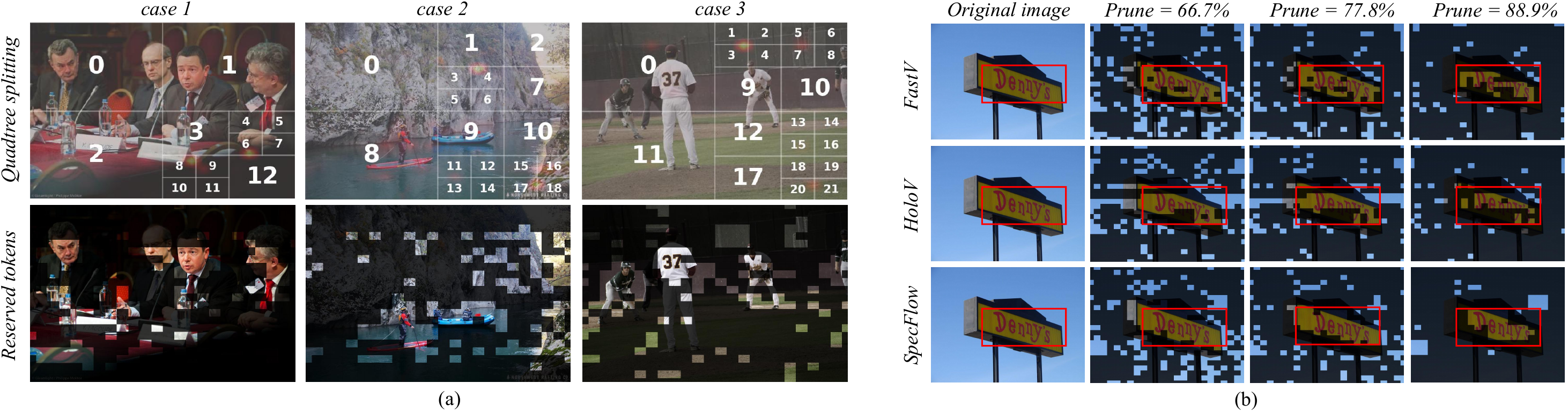}
    \caption{Qualitative visualization of token selection and adaptive splitting.
(a) Visualization of our energy-based quadtree splitting, showing the induced partitions and the final retained-token layouts across representative cases
(b) Comparison of retained visual tokens at different pruning ratios for {SpecFlow} versus FastV and HoloV.} 
    \label{fig:cases}
\end{figure*}

\subsection{Ablation Studies}

\paragraph{Ablation on Diffusion Operator.}
We validate the design choice of constructing a KNN graph based on token feature similarity by comparing it against two variants: raw visual token self-attention (\textit{Token attn}) and \texttt{[CLS]}-initialized attention propagation (\textit{\texttt{[CLS]} attn}). As illustrated in \cref{fig:diffusion_ablation}, our KNN graph-based diffusion consistently outperforms both baselines. Specifically, it achieves 55.3\% on GQA and 54.9\% on TextVQA, surpassing the \textit{Token attn} baseline by 5.7\% and 4.6\%, respectively. These results verify that feature-similarity $k$NN graphs provide a robust structural basis for diffusion, effectively preserving region coherence while mitigating the noise and artifacts inherent in raw attention maps.

\paragraph{Ablation on Adaptive Spatial Pruning Strategies.}
To assess the effectiveness of our adaptive quadtree-based pruning, we compare it against two non-adaptive strategies: \textit{Top-K}, which selects high-energy tokens without spatial constraints; and \textit{Uni Crop}, which employs rigid uniform partitioning. As shown in the results, our quadtree approach yields superior performance, reaching 55.3\% on GQA and 54.9\% on TextVQA. Critically, our method addresses the limitations of the baselines: while \textit{Top-K} often results in spatial fragmentation and \textit{Uni Crop} inefficiently allocates budget to low-importance backgrounds, our quadtree method adaptively refines granularity based on energy variation. This mechanism creates finer partitions in semantically salient regions to preserve detail, and coarser ones in uniform backgrounds to reduce redundancy. This design strikes an optimal balance between semantic preservation and spatial coverage, translating into significant performance gains.

\subsection{Discussion}
SpecFlow’s advantage stems from coupling a structure-aware importance signal with spatially balanced token retention. We perform energy diffusion on the token graph so that semantically similar tokens can 
\begin{wrapfigure}{r}{0.2\textwidth}
\setlength\intextsep{0pt}
\centering
\vspace{-1em}
\captionsetup{type=figure}
\includegraphics[width=1.02\linewidth]{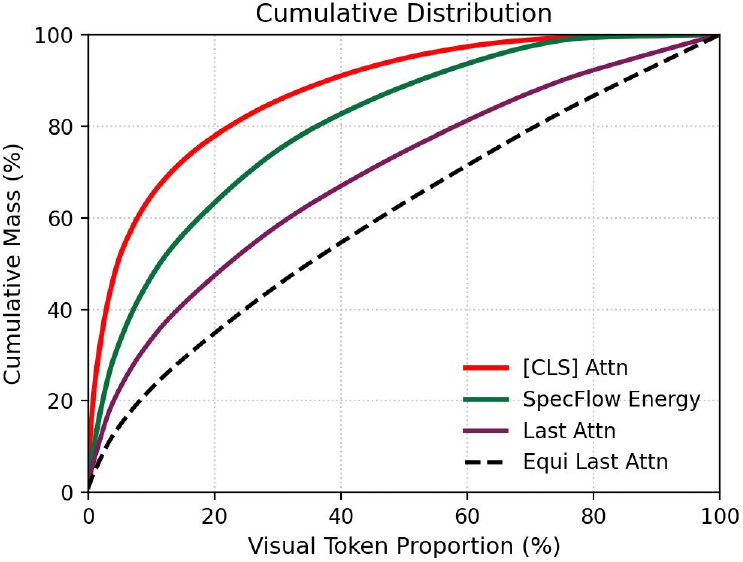}
\vspace{-2em}
\caption{\small Cumulative mass (sorted) over visual token proportion for different importance measures.}
\vspace{-1.25em}
\label{fig:attn}
\end{wrapfigure} propagate importance to each other, which mitigates the fragmentation of coherent objects under aggressive pruning and prevents a small set of peak tokens from dominating the budget. 
Meanwhile, the diffused energy remains sufficiently discriminative, 
avoiding the overly smooth behavior that text-vision attention can exhibit~\citep{zhang2025beyond,zou2025don}, 
and thus better separates salient from non-salient tokens, as evidenced by the cumulative mass distribution in ~\cref{fig:attn}. Building on this importance signal, we further introduce an energy-based quadtree splitting strategy that adaptively partitions the image into regions of varying visual complexity and assigns a per-region token quota. 
As illustrated in~\cref{fig:cases}(a), this regional budgeting preserves object integrity while increasing spatial coverage, producing retained-token layouts that capture key structures across the scene instead of collapsing the budget onto a small area. Consequently, under extreme pruning ratios, {SpecFlow} maintains substantially better subject completeness than existing methods, as shown in~\cref{fig:cases}(b).

\section{Conclusion}
\label{sec:con}
We presented SpecFlow, a training-free framework for efficient VLM inference that replaces destructive token pruning with conservative token condensation. SpecFlow computes a stable importance field via spectral heat diffusion on a $k$NN token graph, enforces spatial coverage through adaptive quadtree budgeting, and summarizes removed tokens with lightweight sink tokens to preserve context and diversity. Experiments on multiple image and video benchmarks show that SpecFlow achieves substantial visual token reduction with minimal accuracy loss, outperforming strong training-free baselines and remaining compatible with FlashAttention.
\section*{Acknowledgments}
This work was supported by the Open Fund of National Key Laboratory of Deep Space Exploration (NKDSEL2025008).
\section*{Impact Statement}
This paper presents SpecFlow, a training-free framework for efficient inference in vision-language models (VLMs) via conservative token condensation. By selecting a smaller set of region-coherent visual tokens and aggregating pruned information into lightweight sink tokens, SpecFlow can reduce the computational and memory cost of VLM inference, especially in the prefill stage. These efficiency gains may lower latency and monetary cost per query and reduce energy consumption, potentially enabling broader access to real-time and on-device VLM applications such as assistive interfaces, interactive education, and robotics.

However, improved efficiency can also lower the barrier to deploying VLMs at scale. When combined with powerful generative or retrieval systems, faster VLM inference could facilitate high-volume uses that raise ethical concerns, including privacy-invasive monitoring, large-scale content analysis, or the amplification of misleading or harmful multimodal content. While SpecFlow does not introduce new model capabilities or new training data by itself, it can increase throughput and thus the scale at which downstream systems are used. We therefore recommend that deployments preserve and strengthen existing safety measures, including content filtering, rate limiting, logging/auditing, and compliance with applicable privacy and data-protection regulations.

A further risk is that aggressive token budgets may degrade performance on some inputs (e.g., small objects, dense text, or rare visual concepts), which could be harmful in high-stakes settings. We recommend validating token budgets on the target domain, monitoring failure cases, and providing fallbacks (e.g., less compression, full-token inference, or human oversight) when uncertainty is high. Finally, SpecFlow inherits limitations and potential biases of the underlying VLMs and evaluation datasets; practitioners should consider domain-specific bias and robustness testing before deployment.

\bibliography{example_paper}
\bibliographystyle{icml2026}

\newpage
\appendix
\onecolumn

\appendix

\section{Additional Theoretical Details}
\label{app:theory}

\subsection{HeatFlow Diffusion: Fixed Point, Mass Preservation, and Dirichlet View}
\label{app:heat_theory}

This appendix provides proofs and auxiliary lemmas for Proposition~\ref{prop:dirichlet-min}.
Throughout, $W\in\mathbb{R}^{N\times N}$ is the nonnegative row-stochastic transition matrix
in Eq.~\eqref{eq:knn_transition} (i.e., $W\mathbf{1}=\mathbf{1}$), and $\alpha\in(0,1)$ is the diffusion strength
in Eq.~\eqref{eq:diffusion}.

\paragraph{A.1 Contraction and unique fixed point (no symmetry required).}
Define the restart-diffusion operator
\begin{equation}
\mathcal{T}(e)\;\triangleq\;(1-\alpha)e^{(0)} + \alpha W^\top e.
\label{eq:T_operator}
\end{equation}

\begin{lemma}[Contraction and unique fixed point of Eq.~\eqref{eq:diffusion}]
\label{lem:diffusion_contraction}
Assume $W\ge 0$ and $W\mathbf{1}=\mathbf{1}$ with $\alpha\in(0,1)$.
Then $\mathcal{T}$ in Eq.~\eqref{eq:T_operator} is an $\alpha$-contraction under the $\ell_1$ norm:
\begin{equation}
\|\mathcal{T}(e)-\mathcal{T}(e')\|_1 \le \alpha \|e-e'\|_1,\quad \forall e,e'\in\mathbb{R}^N.
\label{eq:l1_contraction}
\end{equation}
Consequently, Eq.~\eqref{eq:diffusion} admits a unique fixed point $e^\star$ satisfying
\begin{equation}
(\mathbf{I}-\alpha W^\top)e^\star=(1-\alpha)e^{(0)},
\label{eq:fixed_point_general}
\end{equation}
and the iterates converge linearly:
$\|e^{(t)}-e^\star\|_1 \le \alpha^t \|e^{(0)}-e^\star\|_1$.
\end{lemma}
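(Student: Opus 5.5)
The plan is to reduce everything to one operator-norm estimate. Since $\mathcal{T}(e)-\mathcal{T}(e')=\alpha W^\top(e-e')$, the restart term $(1-\alpha)e^{(0)}$ cancels, and it suffices to show $\|W^\top v\|_1\le\|v\|_1$ for every $v\in\mathbb{R}^N$. The induced $\ell_1$ operator norm of a matrix is its maximum absolute column sum. The columns of $W^\top$ are the rows of $W$, and because $W\ge 0$ and $W\mathbf{1}=\mathbf{1}$, each of these sums to exactly $1$, so $\|W^\top\|_{1\to1}=1$.

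To keep the argument self-contained, I will verify this directly. For any $v$,
\[
\|W^\top v\|_1=\sum_j\Bigl|\sum_i W_{ij}v_i\Bigr|\le\sum_i|v_i|\sum_j W_{ij}=\|v\|_1 ,
\]
which uses the triangle inequality, nonnegativity of $W$, and row-stochasticity, after exchanging the order of summation. This gives Eq.~\eqref{eq:l1_contraction} with constant $\alpha<1$.

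Existence and uniqueness then follow from the Banach fixed-point theorem on the complete space $(\mathbb{R}^N,\|\cdot\|_1)$. Unwinding $e^\star=\mathcal{T}(e^\star)$ gives exactly Eq.~\eqref{eq:fixed_point_general}. There is also an independent algebraic check: every eigenvalue of $\alpha W^\top$ has modulus at most $\alpha\|W^\top\|_{1\to1}<1$, so $\mathbf{I}-\alpha W^\top$ is invertible. This yields $e^\star=(1-\alpha)(\mathbf{I}-\alpha W^\top)^{-1}e^{(0)}$, which can be expanded as the Neumann series $(1-\alpha)\sum_{t\ge0}\alpha^t (W^\top)^t e^{(0)}$.

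For linear convergence, I will write $e^{(t)}-e^\star=\mathcal{T}(e^{(t-1)})-\mathcal{T}(e^\star)$ and apply the contraction inductively, giving $\|e^{(t)}-e^\star\|_1\le\alpha^t\|e^{(0)}-e^\star\|_1$. No symmetry of $W$ is needed at any point.

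There is no real obstacle here. The only point requiring care is using the correct norm. Row-stochasticity controls the $\ell_1$ norm of $W^\top$, or equivalently the $\ell_\infty$ norm of $W$, and not, for example, the $\ell_2$ norm. For that reason the contraction must be stated and proved in $\ell_1$, exactly as the lemma does.
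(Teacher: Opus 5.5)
Your proposal is correct and follows the paper's proof essentially step for step. The paper likewise bounds the $\ell_1$ operator norm of $W^\top$ by its maximum column sum, which equals the row sums of $W$ and hence $1$; it then invokes Banach's fixed-point theorem and reads off linear convergence from the contraction, while your explicit summation check and Neumann-series remark are correct extras that the paper omits.
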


\begin{proof}
For any matrix $M$, $\|Mx\|_1 \le \|M\|_1\|x\|_1$ where $\|M\|_1=\max_j \sum_i |M_{ij}|$~\citep{horn2012matrix}.
Since $W\ge 0$ and each row of $W$ sums to $1$, we have
\[
\|W^\top\|_1 = \max_j \sum_i (W^\top)_{ij} = \max_j \sum_i W_{ji} = \max_j \Big(\sum_i W_{j i}\Big)=1.
\]
Therefore,
\[
\|\mathcal{T}(e)-\mathcal{T}(e')\|_1
= \alpha\|W^\top(e-e')\|_1
\le \alpha \|W^\top\|_1 \|e-e'\|_1
= \alpha \|e-e'\|_1,
\]
proving Eq.~\eqref{eq:l1_contraction}. By Banach's fixed-point theorem~\citep{kreyszig1991introductory}, $\mathcal{T}$ has a unique fixed point $e^\star$.
The fixed-point equation $\mathcal{T}(e^\star)=e^\star$ is equivalent to Eq.~\eqref{eq:fixed_point_general}.
The linear convergence bound follows directly from contraction.
\end{proof}

\paragraph{A.2 Mass preservation (justifies optional renormalization).}

\begin{lemma}[Mass preservation under restart diffusion]
\label{lem:mass_preservation}
Assume $W\ge 0$ and $W\mathbf{1}=\mathbf{1}$.
If $\mathbf{1}^\top e^{(0)} = 1$ and $e^{(0)}\ge 0$, then for all $t\ge 0$,
\begin{equation}
\mathbf{1}^\top e^{(t)} = 1,\qquad e^{(t)}\ge 0.
\end{equation}
\end{lemma}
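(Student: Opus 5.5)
The plan is a simple induction on $t$, tracking nonnegativity and total mass separately. The iteration is Eq.~\eqref{eq:diffusion}, $e^{(t+1)}=(1-\alpha)e^{(0)}+\alpha W^\top e^{(t)}$, and the base case $t=0$ is exactly the hypothesis $\mathbf{1}^\top e^{(0)}=1$, $e^{(0)}\ge 0$. For the inductive step, assume $\mathbf{1}^\top e^{(t)}=1$ and $e^{(t)}\ge 0$, and derive both properties for $e^{(t+1)}$.

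\emph{Nonnegativity.} This is entrywise. Since $W\ge 0$, the transpose $W^\top$ is also entrywise nonnegative, so $W^\top e^{(t)}\ge 0$ whenever $e^{(t)}\ge 0$. Because $\alpha\in(0,1)$, both coefficients $1-\alpha$ and $\alpha$ are positive. Hence $e^{(t+1)}$ is a nonnegative combination of two nonnegative vectors and is itself nonnegative.

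\emph{Mass.} The key identity uses row-stochasticity of $W$:
\[
\mathbf{1}^\top W^\top = (W\mathbf{1})^\top = \mathbf{1}^\top .
\]
Left-multiplying the update by $\mathbf{1}^\top$ then gives
\[
\mathbf{1}^\top e^{(t+1)} = (1-\alpha)\,\mathbf{1}^\top e^{(0)} + \alpha\,\mathbf{1}^\top e^{(t)} = (1-\alpha)+\alpha = 1 .
\]
Note that it is the row-sum condition $W\mathbf{1}=\mathbf{1}$, not a column-sum condition, that is needed, precisely because the update applies $W^\top$.

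There is no real obstacle here. The only point to handle carefully is that transpose step, and it takes one line.

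As a corollary, I would remark that the fixed point $e^\star$ of Lemma~\ref{lem:diffusion_contraction} also lies in the probability simplex. The simplex is closed, and the iterates $e^{(t)}$ converge to $e^\star$ in $\ell_1$. Consequently the per-iteration $\ell_1$ renormalization in Algorithm~\ref{alg:heatflow_short} is the identity in exact arithmetic, up to the $\varepsilon$ term.
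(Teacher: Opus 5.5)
Your proof is correct and matches the paper's own argument: induction on $t$, nonnegativity because $e^{(t+1)}$ is a nonnegative combination of nonnegative vectors (using $W^\top\ge 0$), and mass conservation from $\mathbf{1}^\top W^\top=(W\mathbf{1})^\top=\mathbf{1}^\top$. Your extra corollary is also valid: $e^\star$ lies in the closed probability simplex because it is the $\ell_1$-limit of iterates that stay in it. As you note, with $\varepsilon>0$ the renormalization in the algorithm is only approximately the identity.
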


\begin{proof}
Nonnegativity follows by induction since $e^{(t+1)}=(1-\alpha)e^{(0)}+\alpha W^\top e^{(t)}$ is a nonnegative combination of
nonnegative vectors (as $W^\top\ge 0$).
For mass, use $\mathbf{1}^\top W^\top = (W\mathbf{1})^\top = \mathbf{1}^\top$:
\[
\mathbf{1}^\top e^{(t+1)}
= (1-\alpha)\mathbf{1}^\top e^{(0)} + \alpha \mathbf{1}^\top W^\top e^{(t)}
= (1-\alpha)\cdot 1 + \alpha \mathbf{1}^\top e^{(t)}.
\]
If $\mathbf{1}^\top e^{(t)}=1$, then $\mathbf{1}^\top e^{(t+1)}=1$. Since it holds at $t=0$, it holds for all $t$.
\end{proof}

\paragraph{A.3 Dirichlet view under symmetrized affinity (proof of Prop.~\ref{prop:dirichlet-min}).}
Proposition~\ref{prop:dirichlet-min} additionally assumes that $W$ is obtained by row-normalizing a symmetric affinity matrix.
Concretely, define $\mathbf{S}\in\mathbb{R}^{N\times N}$ and $D=\mathrm{diag}(\mathbf{S}\mathbf{1})$ such that
\begin{equation}
\mathbf{S}=\mathbf{S}^\top\ge 0,\qquad D_{ii}>0,\qquad W=D^{-1}\mathbf{S}.
\label{eq:W_from_SD}
\end{equation}
This holds, for instance, when we use a symmetrized $K$NN edge set and set
$S_{ij}=\exp(\tau\,\mathrm{sim}(z_i,z_j))$ on edges and $0$ otherwise.

\begin{lemma}[Dirichlet energy identity]
\label{lem:dirichlet_identity}
If $\mathbf{S}=\mathbf{S}^\top\ge 0$ and $D=\mathrm{diag}(\mathbf{S}\mathbf{1})$, then for any $f\in\mathbb{R}^N$,
\begin{equation}
f^\top(D-\mathbf{S})f = \frac{1}{2}\sum_{i=1}^N\sum_{j=1}^N S_{ij}(f_i-f_j)^2.
\label{eq:dirichlet_identity}
\end{equation}
\end{lemma}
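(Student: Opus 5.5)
The plan is to expand the right-hand side of Eq.~\eqref{eq:dirichlet_identity} directly and match terms with the quadratic form on the left. The only inputs are the symmetry $S_{ij}=S_{ji}$ and the definition $D_{ii}=\sum_j S_{ij}$. Nonnegativity of $\mathbf{S}$ is not needed for the identity itself; it only ensures the right-hand side is a nonnegative weighted sum.

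First, write $(f_i-f_j)^2=f_i^2-2f_if_j+f_j^2$ and split the double sum into three pieces:
\[
\tfrac12\sum_{i,j}S_{ij}f_i^2,\qquad
\tfrac12\sum_{i,j}S_{ij}f_j^2,\qquad
-\sum_{i,j}S_{ij}f_if_j .
\]
For the first piece, summing over $j$ gives $\tfrac12\sum_i D_{ii}f_i^2$. For the second, relabel $i\leftrightarrow j$ and use $S_{ji}=S_{ij}$ to turn it into the same expression $\tfrac12\sum_i D_{ii}f_i^2$. Together these give $f^\top D f$. The cross term is exactly $-f^\top\mathbf{S}f$, so the total equals $f^\top(D-\mathbf{S})f$, as claimed.

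There is no real obstacle here. The one point that needs care is the relabeling in the second piece. That step is where symmetry is used: without $\mathbf{S}=\mathbf{S}^\top$, the second piece would produce the column sums of $\mathbf{S}$ rather than $D$. This is why the lemma is placed under the symmetrized-affinity assumption of Eq.~\eqref{eq:W_from_SD}. The identity will then feed into the proof of Proposition~\ref{prop:dirichlet-min}: it shows the second term of Eq.~\eqref{eq:dirichlet_obj} is convex, and it supplies the gradient $(D-\mathbf{S})f$ used in the first-order optimality condition.
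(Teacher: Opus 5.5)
Your proof is correct and is essentially the paper's argument: expand $(f_i-f_j)^2$, collect the two square terms into $2f^\top D f$ using $D_{ii}=\sum_j S_{ij}$, and identify the cross term as $-2f^\top\mathbf{S}f$. You are more explicit than the paper about where $\mathbf{S}=\mathbf{S}^\top$ enters, since the paper folds the $f_j^2$ term into row sums without comment; your closing aside does slightly overstate the lemma's downstream role, because the proposition needs it only to certify $D-\mathbf{S}\succeq 0$, and the gradient formula follows from symmetry of $D-\mathbf{S}$ alone.
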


\begin{proof}
Expand the RHS:
\[
\sum_{i,j} S_{ij}(f_i-f_j)^2
=\sum_{i,j}S_{ij}(f_i^2+f_j^2-2f_if_j)
=2\sum_i\Big(\sum_j S_{ij}\Big)f_i^2 -2\sum_{i,j}S_{ij}f_if_j.
\]
Since $\sum_j S_{ij}=D_{ii}$ and $\sum_{i,j}S_{ij}f_if_j=f^\top \mathbf{S} f$, the RHS equals
$2f^\top D f - 2f^\top \mathbf{S} f=2 f^\top (D-\mathbf{S}) f$.
Divide by $2$ to obtain Eq.~\eqref{eq:dirichlet_identity}.
\end{proof}

\begin{proof}[Proof of Proposition~\ref{prop:dirichlet-min}]
We first show that the fixed point exists and is unique. Under Eq.~\eqref{eq:W_from_SD}, $W$ is row-stochastic, so
Lemma~\ref{lem:diffusion_contraction} applies and guarantees a unique fixed point $e^\star$ solving
$(\mathbf{I}-\alpha W^\top)e^\star=(1-\alpha)e^{(0)}$.

Next, define the objective in Proposition~\ref{prop:dirichlet-min}:
\[
\mathcal{J}(e)
=\frac{1}{2}\bigl\|D^{-1}e-D^{-1}e^{(0)}\bigr\|_{D}^2
+\frac{\alpha}{2(1-\alpha)}\,(D^{-1}e)^\top(D-\mathbf{S})(D^{-1}e),
\]
where $\|u\|_D^2=u^\top D u$.
Let $f=D^{-1}e$ and $f_0=D^{-1}e^{(0)}$. Then $\mathcal{J}(e)$ can be rewritten as a function of $f$:
\begin{equation}
\mathcal{J}(e)\equiv \widetilde{\mathcal{J}}(f)
=\frac{1}{2}\|f-f_0\|_D^2 + \frac{\alpha}{2(1-\alpha)} f^\top(D-\mathbf{S})f.
\label{eq:J_f}
\end{equation}
Since $D\succ 0$ (diagonal with positive entries) and $(D-\mathbf{S})\succeq 0$ (graph Laplacian),
$\widetilde{\mathcal{J}}$ is strictly convex and thus admits a unique minimizer~\citep{boyd2004convex}.

Compute the gradient w.r.t.\ $f$:
\[
\nabla_f \widetilde{\mathcal{J}}(f)
= D(f-f_0) + \frac{\alpha}{1-\alpha}(D-\mathbf{S})f.
\]
Setting $\nabla_f \widetilde{\mathcal{J}}(f)=0$ and multiplying by $(1-\alpha)$ yields
\[
(1-\alpha)D(f-f_0) + \alpha(D-\mathbf{S})f = 0
\quad\Longleftrightarrow\quad
(D-\alpha \mathbf{S})f = (1-\alpha)D f_0.
\]
Substituting $f=D^{-1}e$ and $f_0=D^{-1}e^{(0)}$ gives
\[
(D-\alpha \mathbf{S})D^{-1}e = (1-\alpha)e^{(0)}
\quad\Longleftrightarrow\quad
(\mathbf{I}-\alpha \mathbf{S}D^{-1})e = (1-\alpha)e^{(0)}.
\]
Finally, since $W^\top = (D^{-1}\mathbf{S})^\top=\mathbf{S}D^{-1}$, we obtain
\[
(\mathbf{I}-\alpha W^\top)e = (1-\alpha)e^{(0)},
\]
which matches the fixed-point equation. By uniqueness of the minimizer, the optimizer of $\mathcal{J}(e)$ is exactly $e^\star$.
\end{proof}

\subsection{Quota Allocation and Integer Rounding for Quadtree Pruning}
\label{app:quota_rounding}

This appendix provides proof for Proposition~\ref{prop:allocation} and a feasibility guarantee for the integer rounding
procedure described in Sec.~\ref{sec:quadprune}.
Recall that $\mathcal{C}$ denotes the set of quadtree leaf crops, $M(c)=\sum_{(r,t)\in c}E_{r,t}$ is the energy mass,
and the fractional quota is
\begin{equation}
\hat{q}_c = K \cdot \frac{M(c)}{\sum_{c'\in\mathcal{C}} M(c')}.
\label{eq:qhat_def_app}
\end{equation}

\begin{proof}[Proof of Proposition~\ref{prop:allocation}]
Consider the concave optimization problem in Eq.~\eqref{eq:pf_alloc}~\citep{kelly1998rate}:
\[
\max_{\{q_c>0\}}\ \sum_{c\in\mathcal{C}} M(c)\log q_c
\quad\text{s.t.}\quad \sum_{c\in\mathcal{C}} q_c = K.
\]
The Lagrangian is
$\mathcal{L}(q,\lambda)=\sum_{c} M(c)\log q_c - \lambda\big(\sum_c q_c - K\big)$.
Stationarity gives $\partial \mathcal{L}/\partial q_c = M(c)/q_c - \lambda = 0$, hence
$q_c = M(c)/\lambda$ for all $c$.
Imposing $\sum_c q_c = K$ yields $\lambda = \frac{\sum_c M(c)}{K}$, and thus
$q_c = K\cdot \frac{M(c)}{\sum_{c'}M(c')}\equiv \hat{q}_c$.
Since the objective is strictly concave over $q_c>0$, the optimizer is unique.
\end{proof}

\paragraph{Feasible integer rounding with capacity constraints.}
In Sec.~\ref{sec:quadprune}, we form the initial integer quotas by flooring and clipping:
$q_c^{(0)} \leftarrow \min\!\bigl(|c|,\ \lfloor \hat{q}_c\rfloor\bigr)$ (Eq.~\eqref{eq:quota}),
and then distribute the remaining budget to crops with available capacity until $\sum_c q_c=K$.

\begin{lemma}[Feasibility of the floor+clip+fill rounding procedure]
\label{lem:quota_feasible}
Assume $0<K\le N$ and $\sum_{c\in\mathcal{C}} |c| = N$ (i.e., $\mathcal{C}$ is a disjoint cover of the $H\times W$ grid).
Let $q_c^{(0)}=\min\!\bigl(|c|,\ \lfloor \hat{q}_c\rfloor\bigr)$, and let
$R = K-\sum_{c\in\mathcal{C}} q_c^{(0)}\ge 0$.
Consider any procedure that performs $R$ unit increments, each time choosing a crop $c$ with $q_c<|c|$
and setting $q_c\leftarrow q_c+1$.
Then the resulting integer quotas satisfy
\begin{equation}
\sum_{c\in\mathcal{C}} q_c = K,\qquad 0\le q_c \le |c|\ \ \forall c\in\mathcal{C}.
\end{equation}
In particular, the ``descending $M(c)$'' filling strategy in Sec.~\ref{sec:quadprune} is feasible.
\end{lemma}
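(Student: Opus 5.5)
The argument proceeds in three steps: show $R\ge 0$, show that every unit increment is always available, and then read off the final sum and bounds.

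\emph{Step 1: $R\ge 0$.} For every crop, $q_c^{(0)}\le\lfloor\hat q_c\rfloor\le\hat q_c$. Summing and using the definition of $\hat q_c$ in Eq.~\eqref{eq:qhat_def_app}, we get $\sum_c q_c^{(0)}\le\sum_c\hat q_c=K$, so $R=K-\sum_c q_c^{(0)}\ge 0$. This needs $\sum_{c'}M(c')>0$, which holds because the energies are nonnegative by Lemma~\ref{lem:mass_preservation} and normalized. If the total mass were zero, we would read $\hat q_c$ as $0$; then $R=K$ and the argument below is unchanged.

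\emph{Step 2: each of the $R$ increments can be performed.} We maintain the invariant $0\le q_c\le|c|$ together with $\sum_c q_c=K-r$, where $r$ is the number of increments still to be made. At the start, $q_c^{(0)}\ge 0$ because $\hat q_c\ge 0$ and $|c|\ge 0$, and $q_c^{(0)}\le|c|$ by the clipping. Now suppose $r\ge 1$ increments remain. Then
\[
\sum_c q_c=K-r<K\le N=\sum_c|c|,
\]
so some crop $c$ has $q_c<|c|$. Incrementing that crop keeps $q_c\le|c|$ and decreases $r$ by one. By induction on $r$, the procedure never gets stuck.

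\emph{Step 3: conclusion.} After all $R$ increments, $\sum_c q_c=K$ and $0\le q_c\le|c|$ for every crop. The descending-$M(c)$ rule is one particular choice among the admissible crops with $q_c<|c|$, so it is covered by the statement. The sweep in Sec.~\ref{sec:quadprune} may increment one crop several times, up to its capacity. This is still a sequence of unit increments, each made on a crop with spare capacity, so the same argument applies.

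The main obstacle is Step 2: the strict inequality $\sum_c q_c<K\le N$ must be established at every intermediate stage, not only at the start. Carrying the induction invariant $\sum_c q_c=K-r$ handles this.
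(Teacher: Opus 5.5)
Your proof is correct and follows the paper's route. Both arguments bound $\sum_c q_c^{(0)}\le\sum_c\hat q_c=K$ to get $R\ge 0$, then use $K\le N=\sum_c|c|$ to guarantee spare capacity. The paper does this with a one-shot count (total remaining capacity $N-\sum_c q_c^{(0)}\ge R$), whereas your invariant $\sum_c q_c=K-r<N$ shows explicitly that no intermediate increment can get stuck.
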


\begin{proof}
By construction, $0\le q_c^{(0)}\le |c|$ and $\sum_c q_c^{(0)}\le \sum_c \hat{q}_c = K$, hence $R\ge 0$.
The total remaining capacity after the initial step is
\[
\sum_{c} (|c|-q_c^{(0)}) \;\ge\; \sum_{c}|c| - \sum_{c} q_c^{(0)} \;=\; N-\sum_c q_c^{(0)}
\;\ge\; K-\sum_c q_c^{(0)} \;=\; R,
\]
where we used $K\le N$.
Therefore, there exists sufficient capacity to perform $R$ unit increments without violating $q_c\le |c|$.
Each increment increases $\sum_c q_c$ by $1$ while maintaining $q_c\le |c|$ by choice of $c$.
After exactly $R$ increments, $\sum_c q_c = \sum_c q_c^{(0)}+R=K$.
\end{proof}

\subsection{Properties of Sink Tokens (Mean + Residual)}
\label{app:sink_theory}

This appendix provides basic properties of the sink construction in Eq.~\eqref{eq:mean_res_sinks}.
Let $\bar{\mathcal{S}}$ denote the pruned index set, and assume $|\bar{\mathcal{S}}|>0$.
(If $|\bar{\mathcal{S}}|=0$, sink tokens are omitted.)

\begin{lemma}[Mean sink is the optimal 1-mean (least-squares prototype)]
\label{lem:mean_sink_opt}
The mean sink
$u_{\mathrm{mean}}=\frac{1}{|\bar{\mathcal{S}}|}\sum_{i\in\bar{\mathcal{S}}} X_i$
is the unique minimizer of
\begin{equation}
\min_{u\in\mathbb{R}^d}\ \sum_{i\in\bar{\mathcal{S}}} \|X_i-u\|_2^2.
\end{equation}
\end{lemma}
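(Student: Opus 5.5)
The plan is to prove Lemma~\ref{lem:mean_sink_opt} by the standard bias--variance (parallel-axis) decomposition, which gives both optimality and uniqueness at once without invoking calculus. Write $n=|\bar{\mathcal{S}}|>0$ and $\bar X = u_{\mathrm{mean}} = \frac{1}{n}\sum_{i\in\bar{\mathcal{S}}} X_i$. For an arbitrary $u\in\mathbb{R}^d$, I will insert and subtract $\bar X$ inside each residual, $X_i-u=(X_i-\bar X)+(\bar X-u)$, and expand the squared norm.

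Expanding and summing over $i\in\bar{\mathcal{S}}$ gives three groups of terms. The first is $\sum_i\|X_i-\bar X\|_2^2$, which does not depend on $u$. The second is $n\|\bar X-u\|_2^2$. The third is the cross term $2\bigl(\sum_i (X_i-\bar X)\bigr)^\top(\bar X-u)$. The key observation is that $\sum_i (X_i-\bar X)=n\bar X-n\bar X=0$ by the definition of the mean, so the cross term vanishes identically. This yields
\[
\sum_{i\in\bar{\mathcal{S}}}\|X_i-u\|_2^2=\sum_{i\in\bar{\mathcal{S}}}\|X_i-\bar X\|_2^2+n\,\|\bar X-u\|_2^2 .
\]

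From this identity the objective is at least its value at $u=\bar X$, with equality if and only if $n\|\bar X-u\|_2^2=0$. Since $n>0$, equality holds if and only if $u=\bar X$. This gives existence and uniqueness of the minimizer simultaneously.

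As a cross-check, I would also note the equivalent first-order argument. The objective is $n\|u\|^2$ plus terms linear in $u$ plus a constant, so its Hessian is $2n\mathbf{I}\succ0$ and it is strictly convex. Its gradient $-2\sum_i(X_i-u)$ vanishes exactly at $u=\bar X$. This mirrors the convexity argument used in the proof of Proposition~\ref{prop:dirichlet-min}.

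The only point needing care is the assumption $|\bar{\mathcal{S}}|>0$ stated just before the lemma. Without it the mean is undefined and the objective is identically zero, so uniqueness fails; the lemma is stated only under this nonemptiness assumption. I would also remark on notation: the lemma writes $X_i$ where Eq.~\eqref{eq:mean_res_sinks} writes $Z_i$, and I would treat the two as the same token features. There is no genuine obstacle; the vanishing of the cross term is the whole content of the proof.
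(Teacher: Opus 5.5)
Your proof is correct, but your main argument takes a different route from the paper's. The paper expands the objective and computes $\nabla_u=-2\sum_i X_i+2|\bar{\mathcal{S}}|\,u$. It sets this to zero to get $u=u_{\mathrm{mean}}$ and cites strict convexity for uniqueness; this is exactly your ``cross-check'' paragraph.

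Your primary argument instead uses the parallel-axis identity $\sum_{i}\|X_i-u\|_2^2=\sum_{i}\|X_i-\bar X\|_2^2+n\|\bar X-u\|_2^2$, which follows from $\sum_i(X_i-\bar X)=0$. This route is purely algebraic. It gives global optimality and uniqueness in a single step, without needing the fact that a stationary point of a strictly convex differentiable function is its unique global minimizer. It also quantifies the excess cost of any other prototype as exactly $n\|u-u_{\mathrm{mean}}\|_2^2$, i.e., quadratic growth away from the mean.

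The paper's calculus route is slightly shorter and matches the convexity-based style of its other appendix proofs. However, it only locates the minimizer and does not give the exact suboptimality gap.

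Your handling of the $|\bar{\mathcal{S}}|>0$ assumption and of the $X_i$ versus $Z_i$ notation is accurate.
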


\begin{proof}
Expand the objective:
$\sum_i \|X_i-u\|_2^2 = \sum_i (\|X_i\|_2^2 -2u^\top X_i + \|u\|_2^2)$.
Differentiating w.r.t.\ $u$ gives
$\nabla_u = -2\sum_i X_i + 2|\bar{\mathcal{S}}|\,u$.
Setting $\nabla_u=0$ yields
$u=\frac{1}{|\bar{\mathcal{S}}|}\sum_i X_i=u_{\mathrm{mean}}$.
Strict convexity in $u$ implies uniqueness.
\end{proof}

\begin{lemma}[Residual sink as a radius certificate]
\label{lem:residual_cert}
Let
$u_{\mathrm{res}} = X_{i^\star}$ where $i^\star=\arg\max_{i\in\bar{\mathcal{S}}}\|X_i-u_{\mathrm{mean}}\|_2$,
and define the residual radius
$r \triangleq \|u_{\mathrm{res}}-u_{\mathrm{mean}}\|_2$.
Then for all $i\in\bar{\mathcal{S}}$, $\|X_i-u_{\mathrm{mean}}\|_2 \le r$.
Moreover, for any vector $v\in\mathbb{R}^d$,
\begin{equation}
\max_{i\in\bar{\mathcal{S}}}\big|v^\top(X_i-u_{\mathrm{mean}})\big|
\le \|v\|_2\, r.
\end{equation}
\end{lemma}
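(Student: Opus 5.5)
The first claim follows directly from the definition of $i^\star$ as an argmax. For every $i\in\bar{\mathcal{S}}$ we have
\[
\|X_i-u_{\mathrm{mean}}\|_2\le \|X_{i^\star}-u_{\mathrm{mean}}\|_2=\|u_{\mathrm{res}}-u_{\mathrm{mean}}\|_2=r.
\]
Because $|\bar{\mathcal{S}}|$ is finite and nonempty, the maximizer exists. If there are ties, any choice of maximizer gives the same value $r$, so the bound does not depend on how ties are broken.

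For the second claim, I will fix an arbitrary $v\in\mathbb{R}^d$ and an arbitrary $i\in\bar{\mathcal{S}}$. Cauchy--Schwarz gives $|v^\top(X_i-u_{\mathrm{mean}})|\le\|v\|_2\,\|X_i-u_{\mathrm{mean}}\|_2$. Combining this with the first part bounds the right-hand side by $\|v\|_2\,r$. Since the bound holds uniformly in $i$, it also holds for the maximum over $\bar{\mathcal{S}}$, which is exactly the displayed inequality.

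No step here is a real obstacle; the argument uses only the argmax definition and Cauchy--Schwarz. The one thing to flag is notation. The lemma writes $X_i$ for the token features, while Eq.~\eqref{eq:mean_res_sinks} uses $Z_i$. I will treat them as identical (the rows of the feature matrix indexed by $\bar{\mathcal{S}}$), so that $u_{\mathrm{mean}}$ is the same vector as in Lemma~\ref{lem:mean_sink_opt}. I may also add a remark on interpretation: taking $v$ to be a unit query direction shows that every discarded token's projection deviates from the mean sink's projection by at most $r$.
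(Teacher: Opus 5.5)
Your proposal is correct and follows the paper's proof: the first claim comes directly from the argmax definition of $i^\star$, and the second from Cauchy--Schwarz combined with the first claim, followed by taking the maximum over $i\in\bar{\mathcal{S}}$. Your remarks on tie-breaking and on identifying $X_i$ with $Z_i$ are harmless additions that the paper leaves implicit.
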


\begin{proof}
The first claim follows directly from the definition of $i^\star$.
For the second, Cauchy--Schwarz yields
$|v^\top(X_i-u_{\mathrm{mean}})|\le \|v\|_2\,\|X_i-u_{\mathrm{mean}}\|_2 \le \|v\|_2\,r$,
and taking the maximum over $i$ completes the proof.
\end{proof}


\section{Detailed Experiment Settings}
\label{apx:setting}
\subsection{Benchmarks}
\label{apx:dataset}

We evaluated our method on several widely used benchmarks for visual understanding.
For image understanding, we considered nine datasets: GQA \citep{hudson2019gqa}; MMBench (MMB)\citep{liu2024mmbench} and its Chinese split MMB-CN \citep{liu2024mmbench}; MME \citep{fu2023mme}; POPE \citep{li2023evaluating}; VizWiz \citep{bigham2010vizwiz}; SQA (ScienceQA) \citep{lu2022learn}; VQA v2 (VQA${\text{V2}}$) \citep{goyal2017making};  and TextVQA (VQA${\text{Text}}$) \citep{singh2019towards}.

\textbf{GQA}~\citep{hudson2019gqa} GQA is a compositional visual question answering benchmark built around three core elements: images, structured scene annotations (scene graphs), and questions. Beyond the raw images, the dataset provides object-level information such as locations and attributes. Its questions are designed to probe not only visual recognition but also relational and compositional reasoning over the depicted scene.

\textbf{MMBench}~\cite{liu2024mmbench}. MMBench is a multi-dimensional benchmark for evaluating multimodal models. It adopts a three-level ability taxonomy. The first level, L-1, measures two high-level capabilities, perception and reasoning. The second level, L-2, refines these into six sub-abilities. The third level, L-3, provides the most detailed view by specifying 20 fine-grained ability dimensions, enabling a comprehensive analysis of model behavior.

\textbf{MME}~\cite{fu2023mme}. MME serves as a broad benchmark for measuring multimodal model capabilities from multiple perspectives. It consists of 14 subtasks that separately test different aspects of perception and reasoning. The evaluation is formulated with instruction and answer pairs, using concise instructions to reduce potential leakage and improve consistency, which supports a more reliable and fair comparison across models.

\textbf{POPE}~\cite{li2023evaluating}. POPE is designed to quantify object hallucination in vision language models. It evaluates whether a model incorrectly claims that an object exists in an image by asking targeted yes or no questions about object presence. Performance is summarized with Accuracy, Recall, Precision, and F1 under three sampling protocols, which together offer a reliable view of hallucination tendencies and object grounded behavior.

\textbf{ScienceQA}~\cite{lu2022learn}. ScienceQA is a broad science question answering benchmark covering disciplines such as natural science, language-related subjects, and social science. It organizes questions with a hierarchical label system that includes topics, categories, and skills, totaling 26 topics, 127 categories, and 379 skills. This structure supports diverse problem types and enables evaluation of multimodal comprehension, multi-step reasoning, and interpretability.

\textbf{VQA-V2}~\cite{goyal2017making}. VQA-v2 is a large-scale visual question answering benchmark that tests visual understanding using open-ended questions about natural images. It contains 265,016 images covering diverse everyday scenes and objects. For each question, the dataset provides 10 human-annotated reference answers, which supports robust scoring and reduces sensitivity to individual annotator variation.

\textbf{TextVQA}~\cite{singh2019towards}. TextVQA targets settings where answering a question requires reading text that appears inside the image. The task evaluates whether a model can combine visual understanding with text recognition and use the extracted words to support reasoning. Questions therefore depend on both scene content and embedded textual cues, rather than purely visual signals.

\textbf{MSVD-QA}~\cite{xu2017video}. MSVD-QA is a video question answering benchmark built from the Microsoft Research Video Description (MSVD) dataset. It contains 1,970 short video clips and about 50.5K associated question-answer pairs. The questions cover diverse aspects of the video content and are commonly used to evaluate video QA, and in some settings also video captioning related capabilities. Question types are grouped into five classes, namely what, who, how, when, and where.

\textbf{MSRVTT-QA}~\cite{xu2017video}. MSRVTT-QA is a large-scale video QA benchmark comprising 10K videos and roughly 243K question-answer pairs. The dataset emphasizes reasoning over dynamic content, so accurate answers often depend on capturing both appearance information and temporal evolution across frames. As in MSVD-QA, questions are organized into five types, including what, who, how, when, and where, which supports fine-grained evaluation by question category.

\subsection{Implementation Details}
All of our experiments are conducted on Nvidia A6000 GPU. The
implementation was carried out in Python 3.10, utilizing PyTorch 2.1.2, and CUDA 12.1. All baseline settings follow the original paper. For our method, we set the number of diffusion steps for energy propagation to 2. In the quadtree partitioning process, we configure the minimum height and width of each crop (denoted as $m$ in Eq.~\ref{eq:quad_split}) to 4, which defines the finest granularity of spatial division.

\section{Detailed Computational Complexity Analysis}
\label{appendix:complexity}

In this appendix, we provide a detailed empirical analysis of the
computational cost of {SpecFlow}, complementing the formal treatment
in Sec.~\ref{sec:complexity}. We report (i) end-to-end wall-clock
latency, prefill time, and GPU memory, (ii) a fine-grained
breakdown of the pruning-stage overhead, and (iii) a FLOPs comparison
under aggressive pruning.

\subsection{End-to-End Wall-Clock Profiling}
\label{appendix:complexity:wall}

We measure practical efficiency on the NVIDIA RTX A6000 GPU
using LLaVA-1.5-7B. We report prefill time,
end-to-end latency, and peak GPU memory. Two pruning ratios are
considered: a moderate setting (192 retained tokens, $66.7\%$
pruning) and an aggressive setting (64 retained tokens, $88.9\%$
pruning). Results are summarized in Table~\ref{tab:appendix_latency}.

\begin{table}[h]
\centering
\small
\caption{End-to-end efficiency of {SpecFlow} versus baselines on
LLaVA-1.5-7B.}
\label{tab:appendix_latency}
\setlength{\tabcolsep}{4pt}
\begin{tabular}{lccccc}
\toprule
\textbf{Method} & \textbf{Tokens} & \textbf{Prune \%} & \textbf{Prefill (ms)} & \textbf{Latency (s)} & \textbf{Mem.\ (GB)} \\
\midrule
Vanilla                               & 576 & \phantom{0}0.0 & 86.6 & 0.397 & 19.2 \\
\midrule
FastV                                 & 192 & 66.7 & 42.0 & 0.284 & 16.2 \\
HoloV                                 & 192 & 66.7 & 22.0 & 0.248 & 15.8 \\
\textbf{SpecFlow (Ours)}              & 192 & 66.7 & 24.5 & 0.259 & 15.9 \\
\midrule
FastV                                 & \phantom{0}64 & 88.9 & 29.0 & 0.249 & 15.8 \\
HoloV                                 & \phantom{0}64 & 88.9 & 14.0 & 0.221 & 14.8 \\
\textbf{SpecFlow (Ours)}              & \phantom{0}64 & 88.9 & 15.5 & 0.229 & 14.8 \\
\bottomrule
\end{tabular}
\end{table}

Two observations follow. First, {SpecFlow} achieves substantially
faster inference than the unpruned baseline: at $88.9\%$ pruning,
end-to-end latency drops from $0.397$\,s to $0.229$\,s
(a $1.73\times$ speedup) and GPU memory drops from $19.2$\,GB to
$14.8$\,GB. Second, under the same token budget, {SpecFlow}
introduces only a modest additional overhead relative to the
strongest pruning baseline HoloV ($+2.5$\,ms prefill at 192 tokens
and $+1.5$\,ms at 64 tokens), while consistently delivering higher
accuracy than HoloV under our reported settings.

\subsection{Breakdown of the Pruning-Stage Overhead}
\label{appendix:complexity:overhead}

To attribute the runtime cost of {SpecFlow} to its individual
components, we further measure the wall-clock time of each stage:
$k$NN graph construction, spectral heat diffusion, and the combined
quadtree partitioning, Top-$K$ selection, and sink-token
construction. Results are reported in Table~\ref{tab:appendix_breakdown}.

\begin{table}[h]
\centering
\small
\caption{Wall-clock breakdown of the {SpecFlow} pruning module on
RTX A6000 with LLaVA-1.5-7B. The total pruning overhead is small
relative to the end-to-end latency.}
\label{tab:appendix_breakdown}
\setlength{\tabcolsep}{6pt}
\begin{tabular}{lccccc}
\toprule
\textbf{Setting} & \textbf{$k$NN Graph} & \textbf{Diffusion} & \textbf{Quadtree+TopK+Sink} & \textbf{Total} & \textbf{Latency} \\
\midrule
192 tokens & 1.1\,ms & 0.6\,ms & 0.9\,ms & 2.6\,ms & 0.259\,s \\
\phantom{0}64 tokens & 1.1\,ms & 0.6\,ms & 1.2\,ms & 2.9\,ms & 0.229\,s \\
\bottomrule
\end{tabular}
\end{table}

In both settings, the total pruning overhead is at most $2.9$\,ms,
which corresponds to roughly $1\%$ of the end-to-end latency
($0.229$--$0.259$\,s). The $k$NN graph and diffusion costs are
essentially constant across pruning ratios, while the
quadtree/Top-$K$/sink stage scales mildly with the chosen budget.
This confirms that the pruning module is not the runtime bottleneck:
the dominant cost still resides in the downstream LLM execution,
particularly the decoding stage.

\subsection{FLOPs Comparison Under Aggressive Pruning}
\label{appendix:complexity:flops}

To complement wall-clock measurements with a hardware-agnostic
metric, we report total inference FLOPs at the $64$-token setting in
Table~\ref{tab:appendix_flops}. The unpruned LLaVA-1.5-7B requires
$8.12$\,T FLOPs, while {SpecFlow} reduces this to
$1.26$\,T FLOPs ($-84.5\%$). {SpecFlow} also consumes fewer FLOPs
than FastV at the same token budget ($1.26$\,T vs.\ $1.64$\,T), even
after accounting for the additional graph-diffusion and
sink-construction operations introduced by our method.

\begin{table}[h]
\centering
\small
\caption{Total inference FLOPs at the 64-token setting on
LLaVA-1.5-7B.}
\label{tab:appendix_flops}
\begin{tabular}{lc}
\toprule
\textbf{Method} & \textbf{FLOPs} \\
\midrule
LLaVA-1.5-7B (vanilla)              & 8.12\,T \\
FastV (64 tokens)                   & 1.64\,T \\
\textbf{SpecFlow (64 tokens)}       & \textbf{1.26\,T} \\
\bottomrule
\end{tabular}
\end{table}

\subsection{Discussion}
\label{appendix:complexity:discussion}

Combining the three measurements above, we draw the following
conclusions:
\begin{itemize}
  \item \textbf{Substantial inference speedup.} At $88.9\%$ pruning,
    {SpecFlow} reduces end-to-end latency from $0.397$\,s to
    $0.229$\,s and peak GPU memory from $19.2$\,GB to $14.8$\,GB,
    while retaining $95.6\%$ of the unpruned model's accuracy.
  \item \textbf{Negligible pruning overhead.} The combined cost of
    $k$NN graph construction, spectral diffusion, and
    quadtree/sink-token construction stays within $2.6$--$2.9$\,ms,
    i.e., about $1\%$ of the total end-to-end latency. The added
    structural modules in {SpecFlow} are therefore not the runtime
    bottleneck.
  \item \textbf{Favorable FLOPs.} Despite introducing graph-based
    propagation and sink summarization, {SpecFlow} requires fewer
    total FLOPs than FastV at the same $64$-token budget
    ($1.26$\,T vs.\ $1.64$\,T), and only $15.5\%$ of the FLOPs of
    the unpruned model.
\end{itemize}

Overall, these measurements indicate that {SpecFlow} is competitive
with the strongest pruning baseline (HoloV) in runtime while
achieving better accuracy, and it remains efficient in terms of
latency, memory, and FLOPs. The added structural mechanisms do not
become a practical inference bottleneck.

\section{Visualization of Attention and Energy Distribution}
To further motivate our design of energy diffusion (HeatFlow), we visualize the distribution and spatial consistency of both raw [CLS] attention scores and our diffused energy scores. This visualization directly addresses the limitations of attention-based pruning highlighted earlier. As shown in Figure \ref{fig:attn&energy} (left), raw [CLS] attention yields a spiky distribution that concentrates on a small set of high-value tokens and leaves most tokens with negligible weights. This spiky pattern aligns with the brittle point-wise ranking of existing methods, which risks spatial fragmentation and contextual loss during token pruning. In contrast, Figure \ref{fig:attn&energy} (right) shows that HeatFlow transforms this distribution. The density curve spreads across a broader range of values (zoom in for better visualization), and the spatial overlay extends high-energy signals to coherent semantic regions instead of isolated points. This smoothing effect encodes the spatial coherence and contextual dependency of visual tokens, which are ignored by naive attention-based pruning. It ensures our importance scores support robust and structure-preserving token selection.

\begin{figure*}
    \centering 
    \includegraphics[width=1\linewidth]{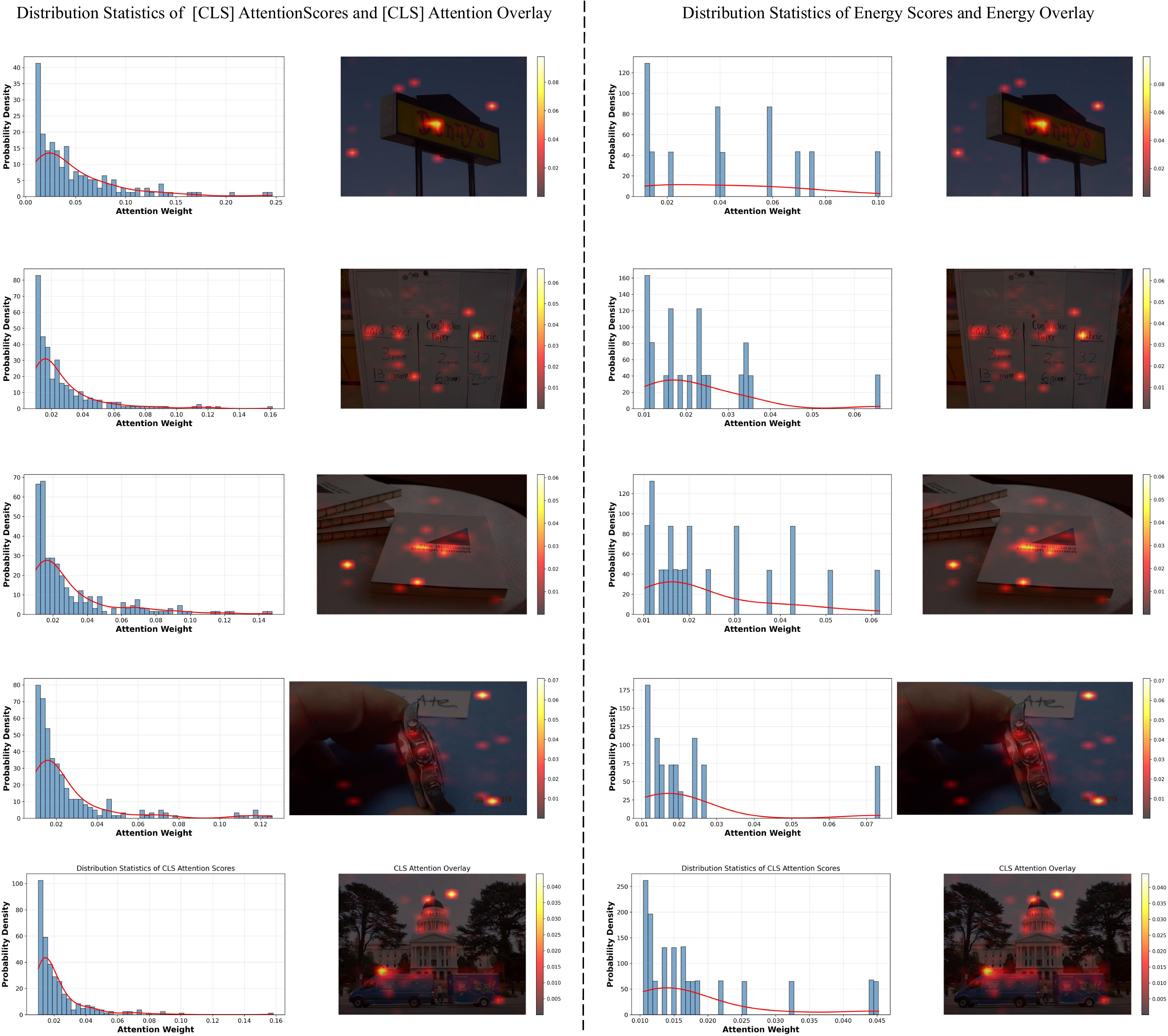}
    \caption{Visualization of raw [CLS] attention and HeatFlow-diffused energy.
Left: Raw attention exhibits a spiky distribution.
Right: Diffused energy is smoothed and covers coherent semantic regions.} 
    \label{fig:attn&energy}
\end{figure*}

\section{Limitations and Future Work}
While SpecFlow achieves strong accuracy and efficiency trade-offs across multiple benchmarks and token budgets, several limitations remain. First, our current evaluation emphasizes accuracy under fixed token budgets and provides FLOPs-based analysis, end-to-end wall-clock measurements (latency, throughput, and peak memory) can depend on hardware, batch size, FlashAttention settings, and the overhead of preprocessing (kNN graph construction, diffusion, and quadtree selection). A thorough systems study that profiles these costs across resolutions and long-video regimes is an important next step. Second, SpecFlow relies on access to intermediate representations (visual token embeddings and attention from the vision encoder), this requirement may limit applicability to black-box VLM APIs or architectures that do not expose reliable CLS attention. Future work could explore more robust or model-agnostic seeding signals (e.g., multi-layer/ multi-head aggregation rules, text-guided cues, or lightweight learned selectors) and adaptive hyperparameters (e.g., diffusion strength/steps, kNN sparsity, and quadtree thresholds) that generalize across backbones without tuning.

Third, our coverage-aware pruning currently assumes a 2D token grid and uses a spatial quadtree. For video inputs, temporal structure is not explicitly modeled, and extending the method to spatiotemporal partitioning or 3D diffusion could further improve stability on dynamic scenes. Finally, the proposed mean + residual sink tokens provide a very compact summary of pruned content, which may be insufficient for inputs requiring fine-grained evidence (e.g., small objects, dense text/OCR, counting, or rare attributes). Exploring adaptive numbers of sink tokens, stronger coreset constructions, or task-aware condensation could better preserve rare but critical cues under extreme compression. We also plan to standardize token-budget accounting (including any appended sinks) and expand analyses of failure cases to clarify when conservative condensation may still break down.

\end{document}